\documentclass{article}

\makeatletter
\def\input@path{{assets/}}
\makeatother
\usepackage[preprint]{arxiv}

\usepackage[utf8]{inputenc} %
\usepackage[T1]{fontenc}    %
\usepackage{hyperref}       %
\usepackage{url}            %
\usepackage{booktabs}       %
\usepackage{amsfonts,amsmath,amssymb} %
\usepackage{amsthm}                 %
\usepackage{nicefrac}       %
\usepackage{microtype}      %
\usepackage{xcolor}         %
\usepackage{xspace}         %
\usepackage{graphicx}
\usepackage{multirow}
\usepackage[capitalize]{cleveref}
\usepackage{algorithm}
\usepackage{algpseudocode}
\usepackage{listings}
\usepackage{wrapfig}
\usepackage{tcolorbox}
\tcbuselibrary{skins,breakable}
\usepackage[table]{xcolor}
\usepackage{tikz}
\usepackage{enumitem}
\setlist[itemize]{leftmargin=1.5em}

\makeatletter
\DeclareRobustCommand\onedot{\futurelet\@let@token\@onedot}
\def\@onedot{\ifx\@let@token.\else.\null\fi\xspace}

\def\eg{\emph{e.g}\onedot}

\makeatother

\definecolor{blueMy}{HTML}{1f77b4}
\definecolor{orangeMy}{HTML}{ff7f0e}
\definecolor{greenMy}{HTML}{2ca02c}
\definecolor{redMy}{HTML}{d62728}
\definecolor{purpleMy}{HTML}{9467bd}
\definecolor{brownMy}{HTML}{8c564b}
\definecolor{usccardinal}{rgb}{0.6, 0.0, 0.0}

\newtheorem{theorem}{Theorem}
\newtheorem{lemma}{Lemma}

\makeatletter
\NewDocumentEnvironment{reptheorem}{o m}{%
  \begingroup
  \renewcommand{\thetheorem}{\ref{#2}}%
  \IfNoValueTF{#1}
    {\begin{theorem}}
    {\begin{theorem}[#1]}
  \label{#2-repeated}
}{%
  \end{theorem}
  \endgroup
}
\makeatother

\newcommand{\ind}[1]{ i }

\newcommand{\action}{ \boldsymbol{a} }
\newcommand{\actionToken}{ {a} }

\newcommand{\traj}{ \boldsymbol{\tau} }
\newcommand{\trajToken}{ {\tau} }

\newcommand{\gammaTurn}{ \boldsymbol{\gamma} }
\newcommand{\gammaToken}{ {\gamma} }

\newcommand{\reward}{ \boldsymbol{r} }

\newcommand{\adv}{ \boldsymbol{A}^{\pi_\theta} }
\newcommand{\advToken}{ A^{\pi_\theta}  }

\newcommand{\advFull}{ \adv( \traj_t, \action_t ) }
\newcommand{\advTokenFull}{ \advToken( \trajToken_t^i, \actionToken_t^i )  }

\renewcommand{\hat}[1]{\widehat{#1}}

\renewcommand{\triangleq}{:=}

\newcommand{\commentTMP}[1]{}

\newcommand{\todofuture}[1]{}

\newcommand{\todom}[1]{}
\newcommand{\tocheck}[1]{}

\newcommand{\todo}[1]{}

\newcommand{\instruct}[1]{}

\definecolor{rowrl}{HTML}{91bfdb}

\definecolor{ayubg}{HTML}{FAFAFA}       
\definecolor{ayutext}{HTML}{5C6773}     
\definecolor{ayucomment}{HTML}{ABB0B6}  
\definecolor{ayukeyword}{HTML}{FA8D3E}  
\definecolor{ayustring}{HTML}{86B300}   
\definecolor{ayuborder}{HTML}{E6E6E6}

\lstdefinestyle{ayulight}{
    backgroundcolor=\color{ayubg},   
    basicstyle=\ttfamily\small\color{ayutext}, %
    columns=fullflexible,            %
    keepspaces=true,                 %
    commentstyle=\color{ayucomment},
    keywordstyle=\color{ayukeyword}\bfseries,
    stringstyle=\color{ayustring},
    breakatwhitespace=false,         
    breaklines=true,                 
    captionpos=b,                    
    numbers=none,                    
    showspaces=false,                
    showstringspaces=false,
    showtabs=false,                  
    tabsize=2,
    frame=single, 
    rulecolor=\color{ayuborder} 
}

\definecolor{promptbrown}{HTML}{8B4513}
\definecolor{promptpurple}{HTML}{7030A0}
\definecolor{promptblue}{HTML}{4A75A4}
\definecolor{promptred}{HTML}{C0504D}
\newtcolorbox{academicprompt}[1]{
    enhanced, breakable,
    title={#1},
    colback=white,
    colframe=black,
    boxrule=1.5pt,
    arc=5pt,
    outer arc=5pt,
    left=6pt, right=6pt, top=6pt, bottom=6pt,
    fontupper=\rmfamily\scriptsize,
    coltitle=black,
    fonttitle=\rmfamily\scriptsize\bfseries,
    attach boxed title to top left={yshift=-2mm, xshift=6pt},
    boxed title style={colback=white, colframe=black, boxrule=1pt, arc=3pt}
}
\newcommand{\promptdashline}{
    \vspace{1ex}
    \noindent\tikz\draw[dashed, black!60, line width=0.5pt] (0,0) -- (\linewidth,0);
    \vspace{1ex}
}
\newcommand{\promptsolidline}{
    \vspace{1ex}
    \noindent\tikz\draw[solid, black!80, line width=0.5pt] (0,0) -- (\linewidth,0);
    \vspace{1ex}
}

\title{Hybrid Advantage Estimation with Unified Critic for VLM Agentic Reinforcement Learning}
\newcommand{\ours}{HyGAE}

\author{Wenxuan Zhang \: Yuhui Wang\thanks{Corresponding authors.} \: Donggang Jia \: Xiaoqian Shen \: Jian Ding \: 
\\[0.2em]
\textbf{Ivan Viola \:  Jürgen Schmidhuber \: Mohamed Elhoseiny\textsuperscript{*}}
\\[0.5em]
KAUST \\[0.5em]
{\footnotesize
\texttt{\{wenxuan.zhang,yuhui.wang,donggang.jia,xiaoqian.shen,jian.ding\}@kaust.edu.sa}}\\
{\footnotesize
\texttt{\{ivan.viola,juergen.schmidhuber,mohamed.elhoseiny\}@kaust.edu.sa}
}
}

\begin{document}

\maketitle

\begin{abstract}
    Large Vision-Language Models (VLMs) now act as agents in interactive environments, where success requires coherent reasoning and decision-making across turns. Although end-to-end training in agentic environments can improve such multi-turn decision-making abilities, current methods mainly rely on either token-wise optimization over concatenated token trajectories or turn-wise optimization with uniform within-turn credit. 
    In this work, we establish theoretical formulations for the two levels of optimization and derive a hybrid advantage that serves both objectives. Furthermore, with an appropriate choice of discount factor and learning target, we prove that a unified critic model can estimate values for both turn-wise and token-wise. As such, we propose \ours{}, an actor-critic framework that jointly optimizes token- and turn-level objectives with the hybrid advantage and unified critic. 
    We conduct extensive evaluations of \ours{} across five multi-turn decision-making environments, where it achieves an average success rate of 91\% and a significant improvement of 10\% over other methods. Furthermore, we provide an in-depth analysis showing that the exact analytic form of the hybrid advantage and return is crucial for optimization.  Project Page: \url{https://wx-zhang.github.io/hygae-web/}. 
\end{abstract}

\section{Introduction}

With the rapid evolution of Large Vision-Language Models (VLMs)~\citep{visualgpt,minigpt4,minigptv2,llava}, they are no longer limited to single-turn visual question answering. VLMs are now expected to serve as agents that perceive, reason, and act in the real world~\citep{gpt5,gemini25pro,claude45sonnet}. In this setting, effective agents need to understand complex environments, make sequential decisions, receive feedback, and adapt their future actions over multiple turns. 

A primary bottleneck of current VLMs lies in their nearsightedness: they often reason based solely on the most recent observation, neglecting the critical context of past actions and feedback. As illustrated in \cref{fig:qualitative}, when faced with identical observations and feedback indicating that the previous action failed, the model redundantly repeats its previous failed output rather than synthesizing its experience. In this way, multi-turn decision-making is reduced to a series of disjointed single-turn generations, where task success becomes highly opportunistic rather than the result of systematic strategic planning.

This limitation mainly arises from the mismatch between conventional single-turn training paradigms and the requirements of multi-turn interaction. In multi-turn interaction, models can receive intermediate information that guides them to refine or maintain their internal beliefs. Several recent works have noticed this issue and attempted to design algorithms that directly support multi-turn agentic reinforcement learning~\citep{vagen,rl4vlm,chu2025sft,wang2025ragen}. A central discussion in this context concerns two levels of reinforcement learning modeling: (1) token-wise RL, which treats each token as a single action, and (2) turn-wise RL, which treats the turn as a whole as one action. Turn-wise optimization better leverages intermediate feedback for intra-turn differentiation, while token-wise training provides fine-grained inner-turn guidance for language generation~\citep{chen2025reinforcement}. 
More recently, several works have adapted classical hierarchical reinforcement learning ideas~\citep{schmidhuber1990hrl0,schmidhuber1991hrl1,schmidhuber1992hrl2} to combine token-level and turn-level optimization in multi-turn agentic RL~\citep{zhou2024archer,vagen}.
To better understand these RL modeling choices, we need to answer an essential question: what is the inherent difference between the two levels of RL modeling?

\begin{figure}[t]
    \centering
    \includegraphics[width=\linewidth]{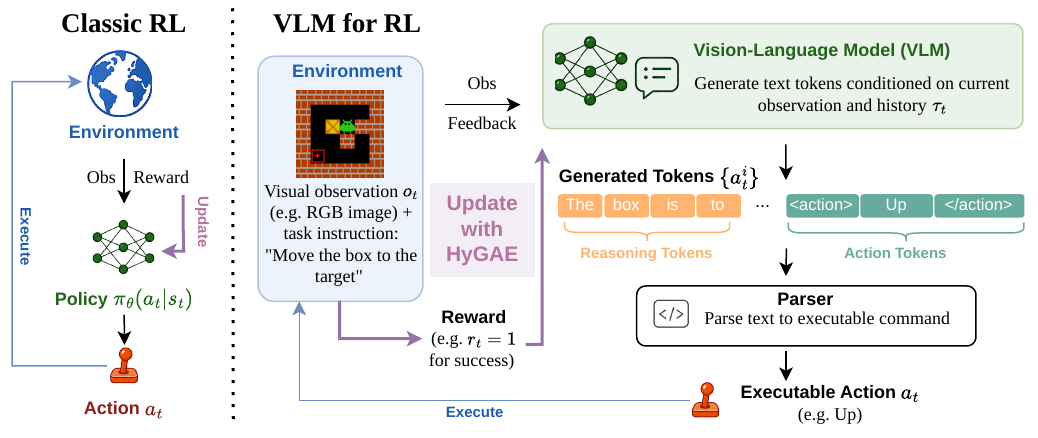}
    \caption{Multi-turn decision-making tasks for VLMs. We update the VLM policy using hybrid advantages from the token and turn levels, while learning the value with a unified value model.}
    \label{fig:overview}
\end{figure}
To answer this question, we first provide a POMDP-based theoretical formulation of token-wise and turn-wise RL, showing that the two formulations differ only in advantage estimation and that their surrogates therefore have the same gradient. Based on this observation, we propose hybrid advantage estimation, which optimizes the two levels of RL simultaneously through a simple linear combination and improves both inner- and intra-turn credit assignment. Furthermore, we show that, with an appropriate choice of discount factor and learning target, we can learn a unified critic model for both turn-wise and token-wise value estimates.
We instantiate the resulting hybrid advantage and value estimation in an actor-critic framework, denoted as \ours{}, which benefits from both language-generation training and turn-wise intermediate feedback during training, as shown in \cref{fig:overview}. 
We further prove that hybrid advantage estimation and unified value training provide a better balance between bias and variance.

We evaluate \ours{} on five multi-turn decision-making benchmarks and compare our algorithm with various baselines. The results show that \ours{} achieves SOTA performance on all tasks, with an average success rate of 0.91. We further provide ablations and qualitative analyses showing that the precise analytic forms of the hybrid advantage and mixed return are crucial for stable optimization. Our contributions are as follows: 
\begin{itemize}
    \item We provide a formal formulation of token-wise and turn-wise RL in multi-turn decision-making tasks and reveal the intrinsic similarities between them.
    \item We design hybrid advantage estimation with unified value model training to optimize the policy for better inner- and intra-turn credit assignment. This formulation is computationally efficient and provides a better bias-variance trade-off.
    \item We achieve SOTA performance on five multi-turn decision-making tasks, reaching an average success rate of 0.91.
\end{itemize}

\section{Related Work}

\paragraph{Multi-turn Decision-Making Tasks for VLMs.}
 Early approaches for solving vision-related multi-turn decision-making tasks mainly fall into two categories: (1) leveraging pretrained visual representations, \eg, CLIP~\citep{clip}, for decision-making, often trained with external reward models \citep{genrl,minedojo}, and (2) directly employing pretrained VLMs as agents for sequential reasoning and action generation \citep{zhou2024minedreamer}.

Recent works increasingly adopt reinforcement learning to train VLMs for these tasks. Some works propose to obtain high-quality training data from external sources \citep{jin2023alphablock,kalie,yang2024octopus,faldoromni}, self-refining data through interaction or feedback \citep{zhang2025mem2ego,zhou2025proposer,sarch2024vlm}, improving reward modeling \citep{digiq,yanglearning,liu2025vlp}, and integrating tool usage during rollout \citep{liu2025visual,lu2025scaling}. In addition, \citet{cosmosr17b} proposes a comprehensive pipeline spanning pre-training to post-training. Despite these advances, most methods still rely on standard RL algorithms such as PPO~\citep{schulman2017proximal} or GRPO~\citep{guo2025deepseek}.

\paragraph{Multi-turn Reinforcement Learning for VLMs.}
An important line of work focuses on the design of training algorithms for foundation models in multi-turn decision-making tasks. Some studies analyze the underlying mechanisms of foundation models in such scenarios \citep{klissarovmodeling,chu2025sft}, while others propose improved optimization strategies, such as advantage-weighted RL with stochasticity-aware estimators \citep{digirl}.  
A central topic in this line of research is how to model trajectories at the turn level or token level \citep{chen2025reinforcement}. On one hand, some approaches introduce complex and computationally expensive hierarchical models to explicitly capture both levels, such as \citet{zhou2024archer}. On the other hand, many methods operate at a single level and rely on heuristic designs for credit assignment. \citet{rl4vlm} introduces a coefficient to balance the contribution of reasoning (thinking) tokens and action tokens, and follow-up works further refine control over reasoning processes \citep{wei2025gtr,wei2025gtr-turbo}. Similarly, \citet{vagen} applies different discount factors to token-level and turn-level rewards, and \citet{fenggroup} analyzes credit assignment in GRPO at both the turn and sequence levels.
In contrast to these approaches, through theoretical analysis, we find that the two levels of RL are intrinsically related. We therefore propose hybrid advantage estimation, which combines token-wise and turn-wise advantages with a unified value function and achieves a better balance between bias and variance.

\section{Preliminaries}

\paragraph{POMDP}
Following~\citep{vagen}, we define the multi-turn interaction of VLMs as a Partially Observable Markov Decision Process (POMDP)  $(\mathcal{S}, \mathcal{A}, \mathcal{O}, \mathcal{T}, \mathcal{R}, \Omega, \gammaTurn)$. Here, $\mathcal{S}$, $\mathcal{A}$, $\mathcal{O}$, and $\mathcal{R}$ denote the state, action, observation, and reward spaces, respectively; $\mathcal{T}$ represents the state transition dynamics; $\Omega$ represents the observation function; and $\gammaTurn \in [0,1)$ is the discount factor. 
At each \emph{decision step} $t$, the agent receives an observation
$\boldsymbol{o}_t \in \mathcal{O} \sim \Omega(\,\cdot \mid \boldsymbol{s}_t)$
conditioned on the system state $\boldsymbol{s}_t \in \mathcal{S}$ and generates an action
$\boldsymbol{a}_t \in \mathcal{A}$ conditioned on the history $\tau_{t}^{i} \triangleq \left(\boldsymbol{o}_{0}, \boldsymbol{a}_{0}, \boldsymbol{r}_{0}, \boldsymbol{o}_{1}, \boldsymbol{a}_{1}, \boldsymbol{r}_{1}, \dotsc, \boldsymbol{o}_{t}, \left(a_{t}^{0}, a_{t}^{1}, \dotsc, a_{t}^{i}\right)\right)$.
The environment then transitions from state $\boldsymbol{s}_t$ to a new state
$\boldsymbol{s}_{t+1} \sim \mathcal{T}(\cdot \mid \boldsymbol{s}_t, \boldsymbol{a}_t)$,
produces a reward
$\boldsymbol{r}_t \sim \mathcal{R}(\cdot \mid \boldsymbol{s}_t, \boldsymbol{a}_t)$,
and sends the agent a new observation $\boldsymbol{o}_{t+1}$.

\paragraph{RL with Actor-Critic Framework}
The objective of reinforcement learning is to find an optimal policy $\pi_\theta$ parameterized by $\theta$ that maximizes the expected return, defined as the sum of the discounted future rewards:
\begin{equation}\label{eq_objective}
J(\theta) 
=  \mathbb{E}_{\pi_\theta}
\left[   
    \sum_{t=0}^{ T-1 } {\gamma}^t \boldsymbol{r}_t
\right]
\end{equation}
To optimize this objective, it is common to employ the surrogate function:
\begin{equation}\label{eq_policy_gradient}
L_\text{actor}( \theta ) =\mathbb{E}\left[\sum _{t=0}^{T-1}\log \pi _{\theta }(\boldsymbol{a}_{t} |\boldsymbol{\tau }_{t})\boldsymbol{A}^{\pi _{\theta }} (\boldsymbol{\tau }_{t} ,\boldsymbol{a}_{t} )\right]
\end{equation}
where
$
\boldsymbol{A}^{\pi _{\theta }} (\boldsymbol{\tau }_{t} ,\boldsymbol{a}_{t} )=\boldsymbol{Q}^{\pi _{\theta }}(\boldsymbol{\tau }_{t} ,\boldsymbol{a}_{t}) -\boldsymbol{V}^{\pi _{\theta }}(\boldsymbol{\tau }_{t})
$ is the advantage value function defined as
\begin{equation*}
\boldsymbol{Q}^{\pi _{\theta }}(\boldsymbol{\tau }_{t} ,\boldsymbol{a}_{t}) 
\triangleq
\mathbb{E}\left[\sum _{t'=t}^{T-t }\boldsymbol{\gamma }^{t'-t}\boldsymbol{r}_{t'} \mid \boldsymbol{\tau }_{t} ,\boldsymbol{a}_{t}\right]
,
V^{\pi _{\theta }}(\boldsymbol{\tau }_{t}) 
\triangleq
\mathbb{E}_{a\sim\pi_\theta(\cdot|\boldsymbol{\tau }_{t})}\left[\boldsymbol{Q}^{\pi _{\theta }}\left(\boldsymbol{\tau }_{t} ,a\right)\right]
\end{equation*}
In practice, the advantage value $\advFull$ is usually estimated by the Generalized Advantage Estimation (GAE)~\cite{gae}:
\begin{equation}\label{eq:gae}
\widehat{\boldsymbol{A}}_t=\; \sum_{t'=0}^{T-t}(\gamma\lambda)^{t'}\,\delta_{t+t'},
\text{ where }
\delta_t \;:=\; \boldsymbol{r}_t + \gamma \widehat{\boldsymbol{V}}_\psi(\boldsymbol{\tau_{t+1}}) - \widehat{\boldsymbol{V}}_\psi(\boldsymbol{\tau_t}),
\end{equation}
where $\delta_t$ is the temporal difference error, and $\lambda \in [0,1]$ is a secondary discount factor that controls the bias–variance trade-off. 
$\widehat{\boldsymbol{V}}_\psi$ is a parameterized value function that estimates the state value. 

We employ the actor-critic framework for the value function estimation and policy update.  

$\widehat{\boldsymbol{V}}_\psi$ is also called the critic, which is updated together with the actor using the TD error as the target.
\begin{equation}\label{eq:critic_loss}
    L_{\text{critic}}(\psi)
    =
    \mathbb{E}_{t}\bigl[(\widehat{\boldsymbol V}_\psi(\boldsymbol{s}_t)-\widehat{\boldsymbol G}_t)^2\bigr],
    \text{ where }
    \widehat{\boldsymbol{G}}_t = \widehat{\boldsymbol{A}}_t + \widehat{\boldsymbol{V}}_\psi(\boldsymbol{\tau_t}).
\end{equation}

\section{Method}\label{sec_method}

As discussed above, few prior works have formally formulated the RL process in VLM agentic training. We first derive turn-wise and token-wise objectives under a unified view, revealing their shared surrogate structure and motivating our hybrid actor-critic framework with \ours{} advantage estimation and unified value model training. Finally, we provide a theoretical analysis explaining why the hybrid approach is both effective and efficient.

\paragraph{Notation:}
Throughout this paper, we use boldface notation to represent turn-wise variables, such as $\action_t$, and standard unbolded font with a superscript local token index $i$ to represent token-wise variables, such as $\actionToken_t^i$.
Full definitions, theorems, and proofs are provided in Appendix A. 

\subsection{Turn- and Token-wise Formulations}
\label{sec:turn_wise_and_token_wise_surrogate}
In multi-turn decision-making tasks,
at each turn $t \in [0,T-1]$, the VLM policy generates a token sequence
$\action_t \triangleq (\actionToken_t^1, \actionToken_t^2, \ldots, \actionToken_t^{I_t})$,
where $\actionToken_t^i$ is the $i$-th token generated at turn $t$, and $I_t$ is the length of the token sequence for turn $t$. The generated tokens encode intermediate reasoning and executable actions. The environment then executes the action, providing a reward $\boldsymbol{r_t}$ and a feedback token sequence $\boldsymbol{o_{t+1}}$ for the next turn.

This generation–interaction loop can be viewed at either the turn or token scale, leading to distinct POMDP formulations and corresponding RL training strategies. Below, we detail these two perspectives.

\paragraph{Token-wise POMDP.} 
A popular perspective treats the generation of each token as an individual action~\cite{vagen, rlhf,lu2025scaling}. 
At each decision step ($t$-th turn and $i$-th token generation),
the VLM policy $\pi_\theta$ takes the context $\trajToken_t^i$ as input and samples a token $a_t^i \sim \pi_\theta( \,\cdot\, | \trajToken_t^i ).$ 
The token-wise reward $r_t^i$ assigned to each token is defined as
$$
{r}_{t}^{i} =\begin{cases}
\boldsymbol{r}_{t} , & i=I_{t}\\
0, & \text{otherwise}
\end{cases}. 
$$
This indicates that if the token is at the end of the turn, its reward is the environmental reward $\reward_t$; otherwise, it is $0$ for intermediate tokens.

Based on this formulation, \cref{eq_objective} can be rewritten by re-indexing across both turn and token indices:
$
J(\theta )=\mathbb{E}_{\pi _{\theta }}\left[\sum _{t=0}^{T-1}\sum _{i=0}^{I_{t} -1}{\gamma }^{i+I_{<t} }{r}_{t}^{i}\right],
$ where $I_{<t} = \sum_{k=0}^{t-1} I_k$ is the total number of tokens generated before the $t$-th turn, and $\gammaToken$ is the discount factor. From this objective, we can derive the token-wise surrogate function:
\begin{equation}\label{eq_surrogate_token}
L^{\mathrm{token}} (\theta )=\mathbb{E}\left[\sum _{t=0}^{T-1}\sum _{i=0}^{I_{t} -1}\log \pi _{\theta }\left( a_{t}^{i} |\tau _{t}^{i}\right) A^{\pi _{\theta }}\left( \tau _{t}^{i} ,a_{t}^{i}\right)\right]
\end{equation}
Here ${A}^{\pi _{\theta }}(\trajToken_t^i, \actionToken_t^i)$ represents the token-wise advantage function, which estimates the advantage value for each generated token.

\paragraph{Turn-wise POMDP.}
This perspective treats the generation of the entire token sequence $\action_t$ within a single turn as one unified macro-action~\cite{rl4vlm}. 
We define the turn-wise POMDP as follows:
at decision step $t$ (the $t$-th turn), 
the VLM policy takes the context $\boldsymbol{\tau}_{t}$ as input and generates $\action_t \sim \pi_\theta(\,\cdot\, | \traj_t )$. 
The context $\boldsymbol{\tau}_{t}$ is constructed from the observations, actions, and rewards returned by the environment:
\begin{equation}\label{eq_traj}
\boldsymbol{\tau}_{t} \triangleq (\boldsymbol{o}_{0}, \boldsymbol{a}_{0}, \boldsymbol{r}_{0}, \boldsymbol{o}_{1}, \boldsymbol{a}_{1}, \boldsymbol{r}_{1}, \dotsc, \boldsymbol{o}_{t})
\end{equation}
The reward in the turn-wise formulation is simply the reward $\boldsymbol{r_t}$ returned by the environment.
The objective follows \cref{eq_objective}, and we can use the surrogate function from \cref{eq_policy_gradient} to optimize it.

\subsection{Hybrid Advantage Function and Unified Value Function}

Note that the turn action, $\boldsymbol{a}_t \triangleq (\actionToken_t^i)_{i=0}^{I_t}$,  is generated autoregressively by the model. The log-probability of generating the turn-wise action, $\log \pi_{\theta}(\boldsymbol{a}_{t} | \boldsymbol{\tau}_{t})$, can be decomposed into the sum of the log-probabilities of generating each token:
\begin{equation}\label{eq_log_probability_decompose}
\log \pi_{\theta}(\boldsymbol{a}_{t} | \boldsymbol{\tau}_{t}) = \sum_{i=0}^{I_{t}-1} \log \pi_{\theta}\left(a_{t}^{i} | \tau_{t}^{i}\right),
\end{equation}
Substituting \Cref{eq_log_probability_decompose} into the surrogate function in \cref{eq_policy_gradient}, we obtain the following form of the turn-wise surrogate:
\begin{equation}\label{eq_surrogate_turn}
L^{\mathrm{turn}} (\theta )=\mathbb{E}\left[\sum _{t=0}^{T-1}\sum _{i=0}^{I_{t} -1}\log \pi _{\theta }\left( a_{t}^{i} |\tau _{t}^{i}\right)\boldsymbol{A}^{\pi _{\theta }} (\boldsymbol{\tau }_{t} ,\boldsymbol{a}_{t} )\right]
\end{equation}
where $\boldsymbol{A}^{\pi_{\theta}}(\traj_t, \action_t)$ represents the turn-wise advantage function, which estimates the advantage value for the turn-wise action.

The turn-wise and token-wise surrogates in \cref{eq_surrogate_turn} and \cref{eq_surrogate_token} have the same form except for the advantage value: the turn-wise surrogate employs the turn-wise advantage $\advFull$, whereas the token-wise surrogate uses the token-wise advantage $\advTokenFull$.

\begin{figure}[t]
    \centering
    \includegraphics[width=\textwidth]{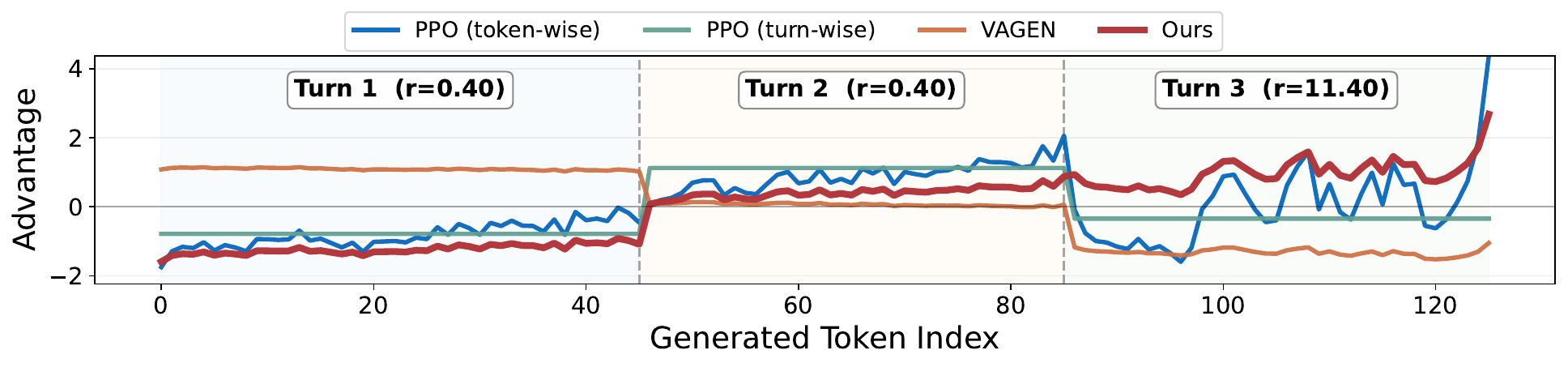}
    \caption{Estimated advantage scales for different advantage estimators over all tokens in the trajectory.}
    \label{fig:advantages}
\end{figure}
The two estimators focus on different properties during training. 
As shown in \cref{fig:advantages}, the turn-wise advantage estimation remains constant across the turn. This preserves the turn-level reward signal and avoids excessive discounting through intermediate reasoning tokens. Conversely, the token-wise advantage estimation provides fine-grained advantage values for each individual token, preserving within-turn differentiation.

Existing methods typically employ either estimator independently or use one signal to guide the other when both advantages are considered. The equivalence above provides a theoretical basis for optimizing with both advantage signals without introducing hierarchical modeling; instead, the two signals can be directly hybridized. This leads to a \emph{hybrid surrogate function} using a hybrid advantage function:
\begin{fontsize}{9.5}{1}
\begin{equation}\label{eq_surrogate_hybrid}
L^{\mathrm{hybrid}} (\theta )=\mathbb{E}\left[\sum _{t=0}^{T-1}\sum _{i=0}^{I_{t} -1}\log \pi _{\theta }\left( a_{t}^{i} |\tau _{t}^{i}\right)\left( \alpha \boldsymbol{A}^{\pi _{\theta }} (\boldsymbol{\tau }_{t} ,\boldsymbol{a}_{t} )+( 1-\alpha ) A^{\pi _{\theta }}\left( \tau _{t}^{i} ,a_{t}^{i}\right)\right)\right]
\end{equation}
\end{fontsize}\ignorespacesafterend
    
Formally, the following theorem demonstrates that the hybrid surrogate yields the same policy gradient as the original objective $J(\theta)$.
\begin{theorem}\label{theorem_gradient}
    Let the discount factors of the turn-wise and token-wise formulations satisfy $\gammaTurn_t = \gammaToken^{I_t}$, where $I_t$ is the length of turn $t$. Then, for any $\alpha \in [0,1]$ in \cref{eq_surrogate_hybrid}, we have
    \[
    \nabla J(\theta)
    =
    \nabla L^{\mathrm{hybrid}}(\theta)
    =
    \nabla L^{\mathrm{turn}}(\theta)
    =
    \nabla L^{\mathrm{token}}(\theta).
    \]
\end{theorem}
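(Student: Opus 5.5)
The plan is to show that both $L^{\mathrm{turn}}$ and $L^{\mathrm{token}}$ reproduce the policy gradient of $J(\theta)$. Since $L^{\mathrm{hybrid}}$ is linear in the advantage, it equals $\alpha L^{\mathrm{turn}}+(1-\alpha)L^{\mathrm{token}}$. Its gradient is therefore $\alpha\nabla J+(1-\alpha)\nabla J=\nabla J$ for every $\alpha\in[0,1]$. Throughout, gradients of the surrogates are taken with the advantage treated as a constant (stop-gradient), i.e. $\nabla L=\mathbb{E}[\sum \nabla\log\pi_\theta(\cdot)\,A]$. This is the standard reading of \cref{eq_policy_gradient}.

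\textbf{Step 1: the two objectives coincide.} I first check that under $\gammaTurn_t=\gammaToken^{I_t}$ the token-wise objective equals $J(\theta)$. Only the last token of each turn carries reward, so the token objective equals $\mathbb{E}[\sum_t \gamma^{I_{<t}+I_t-1}\boldsymbol r_t]$. The turn-wise discounted return uses the cumulative discount $\prod_{k<t}\gammaTurn_k=\gamma^{I_{<t}}$, which matches up to a constant per-turn offset $\gamma^{I_t-1}$. I would absorb this offset into the indexing convention, letting the reward be credited at the first token or assigning $\gamma^{I_{<t}}$ as the turn's discount. This bookkeeping (off-by-one in $i$ ranging over $0,\dots,I_t-1$ versus $1,\dots,I_t$) is the first place to be careful. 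The upshot is that the two POMDPs share one objective $J$ once the turn discount is the product of per-turn $\gammaTurn$'s.

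\textbf{Step 2: the turn-wise policy gradient theorem.} Next, $\nabla J=\mathbb{E}[\sum_t\nabla\log\pi_\theta(\action_t|\traj_t)\,\adv(\traj_t,\action_t)]$. The derivation is the usual likelihood-ratio argument on trajectory densities, in which the environment terms ($\mathcal T,\Omega,\mathcal R$) carry no $\theta$. Causality removes past rewards, and subtracting the baseline $\boldsymbol V^{\pi_\theta}(\traj_t)$ has zero expectation. Substituting the autoregressive factorisation \cref{eq_log_probability_decompose} rewrites this exactly as $\nabla L^{\mathrm{turn}}$.

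\textbf{Step 3: the token-wise gradient.} The token-wise POMDP is itself a valid (deterministic-within-turn) decision process, with state $\tau_t^i$, action $a_t^i$, and transitions that are trivial inside a turn and environmental at turn boundaries. Applying the same policy gradient theorem to it gives $\nabla J=\mathbb{E}[\sum_{t,i}\nabla\log\pi_\theta(a_t^i|\tau_t^i)A^{\pi_\theta}(\tau_t^i,a_t^i)]=\nabla L^{\mathrm{token}}$. To make the link to Step 2 explicit, I would instead show directly that for each token,
\[
\mathbb{E}\bigl[\nabla\log\pi_\theta(a_t^i|\tau_t^i)\bigl(\adv(\traj_t,\action_t)-A^{\pi_\theta}(\tau_t^i,a_t^i)\bigr)\bigr]=0 .
\]
To see this, condition on $\tau_t^i$ and $a_t^i$. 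The turn $Q$-value averaged over the remaining tokens $a_t^{i+1},\dots$ equals the token $Q$-value $Q(\tau_t^i,a_t^i)$, up to the discount conversion from Step 1. The difference of the two baselines is $V(\tau_t^i)-\boldsymbol V(\traj_t)$, which depends only on $\tau_t^i$ and so has zero score-function expectation.

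The main obstacle is the tower-property step in Step 3. One must verify that the token-wise $Q$ at a mid-turn token, which discounts the end-of-turn reward by $\gamma^{I_t-1-i}$, matches the turn-wise quantity. The discounts differ by a factor depending on $i$ and on the random future length $I_t$. I would handle this by defining the token-wise values with discount measured from the start of the turn, or equivalently by checking that the $\gamma^{-i}$-type factors are $\tau_t^i$-measurable rescalings. I expect this to hold exactly only under the stated condition $\gammaTurn_t=\gammaToken^{I_t}$, which makes cumulative turn and token discounts agree at every turn boundary.
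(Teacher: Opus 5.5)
Your skeleton matches the paper's proof. The paper gets $\nabla L^{\mathrm{turn}}=\nabla J$ from $L^{\mathrm{turn}}=L_{\text{actor}}$ (autoregressive factorisation) plus the cited policy gradient lemma, and $\nabla L^{\mathrm{token}}=\nabla J$ by asserting that $J$ ``can be rewritten by indexing tokens'' and applying the same lemma to the token process. The hybrid case then follows by linearity in $\alpha$. The paper never does your Step~1 bookkeeping or your per-token check.

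The obstacle you flag is a genuine gap, and neither of your proposed fixes closes it. In Step~1, $\gamma^{I_t-1}$ is not a constant offset: $I_t$ is the length of the sampled response, a random variable controlled by $\theta$. So $\mathbb{E}[\sum_t\gamma^{I_{<t}}\boldsymbol r_t]$ and $\mathbb{E}[\sum_t\gamma^{I_{<t}+I_t-1}\boldsymbol r_t]$ are different functions of $\theta$ (the second penalises long turns). Crediting $\boldsymbol r_t$ to the first token of the turn breaks the causality step of the token-level derivation, because $\boldsymbol r_t$ depends on $a_t^i$ for $i>0$. Exact agreement requires either a turn-level reward of $\gamma^{I_t-1}\boldsymbol r_t$ (still a function of $\boldsymbol a_t$) or $\gamma=1$.

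In Step~3, with $\boldsymbol r_t$ on the last token,
\[
Q^{\pi_\theta}(\tau_t^i,a_t^i)=\gamma^{-i}\,\mathbb{E}\bigl[\sum_{t'\ge t}\gamma^{I_{<t'}-I_{<t}+I_{t'}-1}\boldsymbol r_{t'}\bigm|\tau_t^i,a_t^i\bigr].
\]
A $\tau_t^i$-measurable \emph{multiplicative} factor does not drop out of a score-function expectation; only additive $\tau_t^i$-measurable terms do. So your displayed identity fails for $i<I_t-1$ whenever $\gamma<1$. The hypothesis $\boldsymbol\gamma_t=\gamma^{I_t}$ only aligns discounts at turn boundaries and cannot repair this. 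A single turn of two tokens with reward $R(a^0,a^1)$ already gives $\nabla L^{\mathrm{turn}}-\nabla L^{\mathrm{token}}=(1-\gamma)\,\mathbb{E}[\nabla\log\pi_\theta(a^0\mid\boldsymbol o_0)\,R]\neq 0$ in general, and there the hypothesis is vacuous. Measuring token values from the start of the turn just replaces $A^{\pi_\theta}(\tau_t^i,a_t^i)$ by $\gamma^iA^{\pi_\theta}(\tau_t^i,a_t^i)$, i.e.\ it changes $L^{\mathrm{token}}$.

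The missing ingredient is the state-discount weighting that the discounted policy gradient theorem actually produces. Both your Step~2 and the paper's lemma $\nabla J=\nabla L_{\text{actor}}$ drop it: turn $t$ should carry $\gamma^{I_{<t}}$ and token $(t,i)$ should carry $\gamma^{I_{<t}+i}$. With these weights, $\gamma^{I_{<t}+i}\gamma^{-i}=\gamma^{I_{<t}}$, and your tower-property argument goes through exactly, given the reward adjustment above. The leftover term is $\gamma^{I_{<t}}\bigl(\mathbb{E}[\boldsymbol Q^{\pi_\theta}(\boldsymbol\tau_t,\boldsymbol a_t)\mid\tau_t^i]-\boldsymbol V^{\pi_\theta}(\boldsymbol\tau_t)\bigr)$, which is $\tau_t^i$-measurable. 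Without the weights, the four gradients coincide in general only for $\gamma=1$.

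So you must either restrict to the undiscounted case or restate the surrogates with these weights. The paper's proof, which simply cites the policy gradient theorem at both levels, does not supply this step either.
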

As we will show in \cref{sec_theorem}, the hybrid advantage can also balance the bias--variance tradeoff.
Therefore, utilizing the hybrid advantage is theoretically sound and effective compared to relying on a single surrogate for advantage estimation.

Naively, the turn-wise and token-wise advantage computations involve different value models, $\boldsymbol{\widehat{V}_\psi}$ and $\widehat{V}_\psi$. One would need to train two separate value models within the actor-critic framework, which introduces significant computational overhead and memory usage. However, we demonstrate that by appropriately setting the discount factors $\gammaTurn$ and $\gammaToken$, a single, unified value function can be employed for the estimation of both turn-wise and token-wise advantages.

\begin{theorem}\label{thm:unified_value_function}
Let the discount factors of the turn-wise and token-wise formulations satisfy $\gammaTurn_t = \gammaToken^{I_t}$, where $I_t$ is the length of turn $t$. Then the corresponding value functions are consistent at the last token of the trajectory, that is,
\begin{equation}\label{eq:value_function_relation}
    \boldsymbol{V}^\pi(\boldsymbol{\tau}_t) = V^\pi(\tau_t^{I_t}) \quad \text{for any turn } t .
\end{equation}
\end{theorem}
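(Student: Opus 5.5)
The approach is to unroll both value functions into discounted sums of the environment rewards and match the discount exponents term by term. The notation needs one clarification first. The token-wise context $\tau_t^{I_t}$ must denote the context at which the \emph{next} decision is made after turn $t$'s generation begins. With tokens indexed $0,\dots,I_t-1$ in \cref{eq_surrogate_token}, I read $\tau_t^{I_t}$ as the token-level state right before the first token of turn $t$ is sampled. In other words, it is identified with $\traj_t$: the history up to $\boldsymbol{o}_t$ together with an empty partial turn, so it contains exactly the information in $\traj_t$. I will state this identification explicitly, since the theorem is a statement that the critic evaluated at the turn boundary token gives the turn value. Under this reading, the future token sequence generated from $\tau_t^{I_t}$ has the same distribution under $\pi_\theta$ as the turn actions $\action_t,\action_{t+1},\dots$ generated from $\traj_t$. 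This holds by the autoregressive factorization \cref{eq_log_probability_decompose}, together with the fact that the environment transitions depend only on completed turns.

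Next I write the token value as an expectation of the discounted token reward sum from that position. Since $r_{t'}^i=0$ except at $i=I_{t'}$ (the last token of each turn), the sum collapses to one term per turn:
\[
V^\pi(\tau_t^{I_t})=\mathbb{E}\Bigl[\sum_{t'\ge t}\gammaToken^{\,\sum_{k=t}^{t'} I_k - c}\,\boldsymbol{r}_{t'}\Bigr],
\]
where $c$ is a fixed offset. The offset depends on whether the reward token's own step is discounted, and I will fix it via the indexing convention so that the first reward $\boldsymbol r_t$ carries weight $1$. In the same way, I expand $\boldsymbol{V}^\pi(\traj_t)=\mathbb{E}[\sum_{t'\ge t}(\prod_{k=t}^{t'-1}\gammaTurn_k)\boldsymbol r_{t'}]$. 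Here the hypothesis $\gammaTurn_k=\gammaToken^{I_k}$ is interpreted as a turn-dependent (random) discount, consistent with the statement's $\gammaTurn_t$.

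Substituting $\gammaTurn_k=\gammaToken^{I_k}$ gives $\prod_{k=t}^{t'-1}\gammaTurn_k=\gammaToken^{\sum_{k=t}^{t'-1}I_k}$. This exactly matches the token exponent once the offset is set so that the exponent counts the tokens strictly between the current position and the reward token. Because the exponent match holds pathwise for every realized trajectory, including the random lengths $I_k$, and the two expectations are over the same trajectory law, the values are equal.

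The main obstacle is the indexing bookkeeping. I need to make sure that the within-turn offset, i.e. the discounting inside turn $t$ before $\boldsymbol r_t$ is collected, cancels or matches. The cleanest way to handle this is to define the turn-$t$ reward as attached to the boundary token, so that the exponent from the boundary of turn $t$ to the reward of turn $t'$ is exactly $\sum_{k=t}^{t'-1}I_k$. If instead the boundary is read as the last token of turn $t$ (after $\boldsymbol r_t$ has already been emitted), the identity should be stated with the shift to $\boldsymbol{V}^\pi(\traj_{t+1})$, and I would verify which reading matches the paper's convention. An alternative that avoids the unrolling is to check that both sides satisfy the same Bellman recursion $V=\mathbb{E}[\boldsymbol r_t+\gammaToken^{I_t}V_{\text{next}}]$ with the same terminal value $0$, and then conclude by backward induction on $t$ from $T$.
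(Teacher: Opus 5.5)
\paragraph{Verdict.} Your route is the paper's route: unroll both values, substitute $\boldsymbol{\gamma}_k=\gamma^{I_k}$, collapse the sparse token rewards to one term per turn, and match exponents. There is a genuine gap, though. The step you defer as bookkeeping is where the argument fails.

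\paragraph{Where the exponent match fails.} Take the paper's token reward, with $r_t^i=\boldsymbol{r}_t$ only on the \emph{last} token of turn $t$, and your reference state at the start of turn $t$. Then the offset is not yours to choose. The reward $\boldsymbol{r}_{t'}$ sits $\sum_{k=t}^{t'}I_k-1$ tokens away, so $c=1$ is forced. The token weight is therefore $\gamma^{I_{t'}-1}\prod_{k=t}^{t'-1}\gamma^{I_k}$, which differs from the turn weight by the random factor $\gamma^{I_{t'}-1}$.

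Giving $\boldsymbol{r}_t$ weight $1$ would require $c=I_t$. That is random, not an indexing convention. It would also turn the remaining exponents into $\sum_{k=t+1}^{t'}I_k$, not $\sum_{k=t}^{t'-1}I_k$.

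Here is a concrete counterexample: a single turn of length $I_0\ge 2$ with $\boldsymbol{r}_0\equiv 1$. The token value at the start is $\mathbb{E}[\gamma^{I_0-1}]\le\gamma<1$, while $\boldsymbol{V}^\pi(\boldsymbol{\tau}_0)=1$, and the discount hypothesis never enters. Your Bellman alternative fails for the same reason. The token value at a turn start satisfies $V=\mathbb{E}[\gamma^{I_t-1}\boldsymbol{r}_t+\gamma^{I_t}V_{\mathrm{next}}]$, not $\mathbb{E}[\boldsymbol{r}_t+\gamma^{I_t}V_{\mathrm{next}}]$.

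\paragraph{What can actually be proved.} Your repair, attaching $\boldsymbol{r}_t$ to the first (boundary) token of the turn, does make the exponents $\sum_{k=t}^{t'-1}I_k$. However, it redefines the token-wise reward and hence $V^\pi$, so it proves a different statement. Keeping the paper's reward, there are two honest versions.
\begin{enumerate}
\item Let $\tau_t^{I_t}$ be the context from which the last token of turn $t$ is sampled; this is closest to the paper's ``value of the final token.'' Index the discount by the turn being entered, $\prod_{k=t+1}^{t'}\gamma^{I_k}$. The returns then agree pathwise. But this context contains $a_t^{1},\dots,a_t^{I_t-1}$, so the tower property gives only $\boldsymbol{V}^\pi(\boldsymbol{\tau}_t)=\mathbb{E}[V^\pi(\tau_t^{I_t})\mid\boldsymbol{\tau}_t]$, not pointwise equality.
\item Keep your start-of-turn state and your product $\prod_{k=t}^{t'-1}\gamma^{I_k}$, but use the semi-MDP turn reward $\gamma^{I_t-1}\boldsymbol{r}_t$. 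Equality is then exact.
\end{enumerate}

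\paragraph{The paper's proof.} It does not resolve this either. It writes the turn discount as $\gamma^{I_{t'}t'}$, which equals a product of per-turn discounts only when turn lengths coincide. It also switches the conditioning from $\boldsymbol{\tau}_t$ to $\tau_t^{I_t}$ without justification. So you are not missing an idea from the paper. You need to commit to one of the conventions above and state the conclusion it actually supports.
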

\cref{thm:unified_value_function} shows that, given the appropriate discount factors, the turn-wise value $\boldsymbol{V}^\pi(\boldsymbol{\tau}_t)$ is exactly equal to the token-wise value of the final token $V^\pi(\tau_t^{I_t})$. This equivalence implies that we can use exactly one shared value model, $V_\psi$, which provides turn-wise values at the end of the turn and token-wise values at intermediate token positions.
\begin{figure}[t]
    \centering
    \includegraphics[width=.95\linewidth]{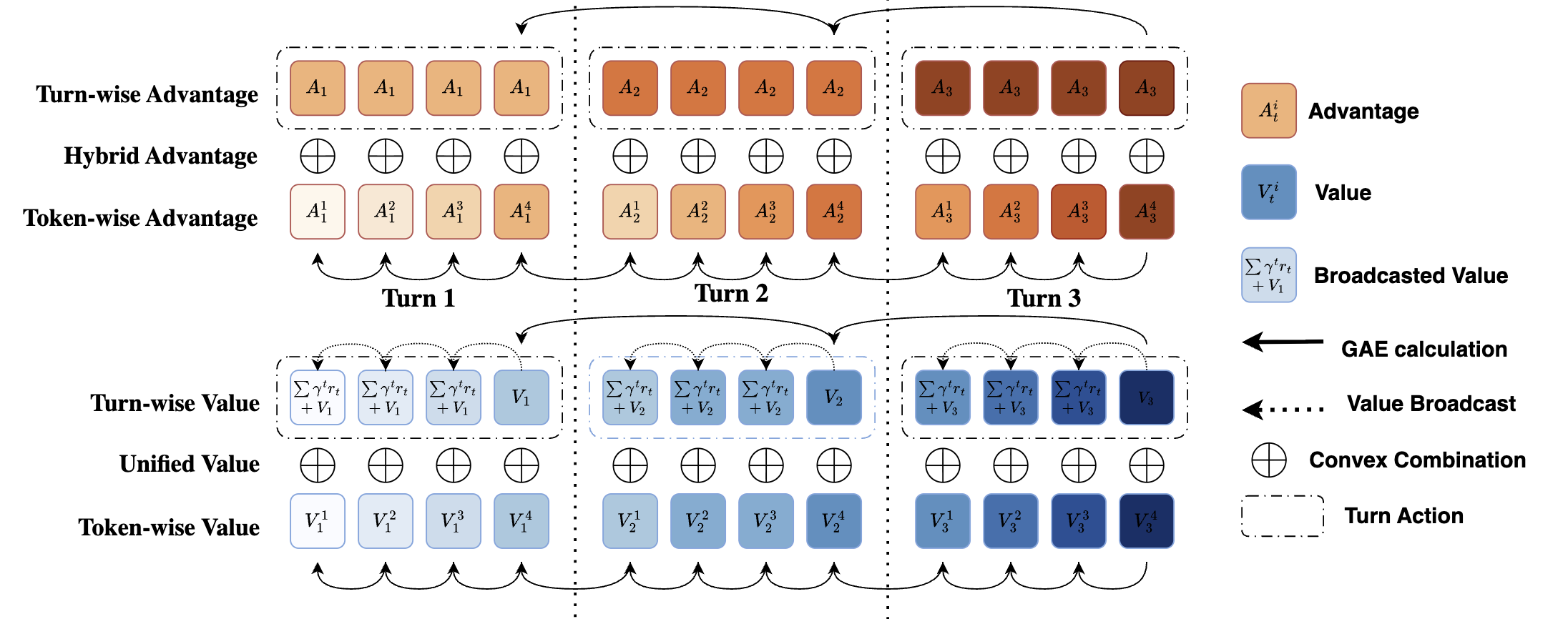}
    \caption{Actor-critic framework with \ours{}: we combine the token-wise advantage and turn-wise advantage to update the policy model, while broadcasting the turn value with discounted rewards to learn a unified value model.}
    \label{fig:algorithm}
\end{figure}
\subsection{\ours{} Estimation}\label{sec:estimation}
\begin{algorithm}[t]
    \caption{\ours{}}
    \label{alg:mixed_advantage_training}
    \begin{algorithmic}[1]
        \footnotesize 
        \Statex \textbf{Input:} Policy $\pi_\theta$; Value Model $\widehat{V}_\psi$.
    \For{each iteration $k = 0$ to $K$ }
        \State  Interact with the environment and collect trajectory $\traj$ (see \cref{eq_traj}), which contains $T$ turns.
        \For{each turn $t = T-1$ to $0$  }  
            \State Obtain turn-wise advantage $\widehat{\boldsymbol{A}}_t$ and return $\widehat{\boldsymbol{G}}_t$ via turn-wise GAE in \cref{eq_advantage_turn} and \cref{eq_return_turn}.
            \State Obtain token-wise advantage $\widehat{A}_t^i$ and return $\widehat{G}_t^i$ via token-wise GAE in \cref{eq_advantage_token} and \cref{eq_return_token}.
            \For{each token $i = I_t-1$ to $0$}\Comment{\ours{} advantage estimation}
            \State Calculate the hybrid advantage $\widehat{\mathsf{A}}_t^i$ via \cref{eq_advantage_hybrid}.
            \State Construct the training target for the turn-wise value model by broadcasting the turn-wise return with discounted rewards to each token position via \cref{eq_return_turn}.
            \State Calculate the hybrid return ${\widehat{\mathsf{G}}_t^i}$ via \cref{eq:mixed_return}.
            \EndFor
        \EndFor
        \State Update the policy $\pi_\theta$ and value model $\widehat{V}_\psi$ via the policy loss in \cref{eq_surrogate_practice} and the TD loss with the target in \cref{eq:mixed_return}. 
    \EndFor
        \State Return policy $\pi_\theta$ and value model $\widehat{V}_\psi$.
    \end{algorithmic}
\end{algorithm}

In practice, following \cref{eq:gae}, we can estimate the hybrid advantage value by combining the turn-wise and token-wise advantage estimates. As shown in \cref{fig:algorithm}, we first calculate the turn-wise advantage estimate by 
\begin{equation}\label{eq_advantage_turn}
    \widehat{\boldsymbol{A}}_{t} =\sum _{t'=0}^{T-t}(\boldsymbol{\gamma } \lambda )^{t'}\boldsymbol{\delta }_{t'},
    \quad
    \text{where }
    \boldsymbol{\delta }_{t'} =\boldsymbol{r}_{t'} +\boldsymbol{\gamma }\boldsymbol{\widehat{V}_\psi}(\boldsymbol{\tau }_{t'+1}) -\boldsymbol{\widehat{V}_\psi}(\boldsymbol{\tau }_{t'})
    \end{equation}
We then calculate the token-wise advantage value $\advTokenFull$ by 
\begin{fontsize}{9.3}{1}
\begin{equation}\label{eq_advantage_token}
\hat{{A}}_{t}^{i} =\sum _{i'=0}^{I-i-I_{< t}}( \gamma \lambda )^{i'} \delta ^{( i'+i+I_{< t})},
\text{ where }
\delta ^{( i')} =r^{( i')} +\gamma \hat{V}_\psi\left( \tau ^{( i'+1)}\right) -\hat{V}_\psi\left({\tau }^{( i')}\right)
\end{equation}
\end{fontsize}\ignorespacesafterend
For simplicity, we use the superscript $(i')$ to denote the global token index in the whole trajectory. Here, $\hat{V}_\psi$ is the value model in the actor-critic framework.

Finally, we combine the turn-wise and token-wise advantage values to obtain the hybrid advantage value $\hat{\mathsf{A}}_t^i$ by \cref{eq_advantage_hybrid}.
\begin{equation}\label{eq_advantage_hybrid}
    \hat{\mathsf{A}}_t^i
=
\alpha  \widehat{\boldsymbol{A}}_{t} + (1-\alpha) \widehat{A}_t^i, 
\quad
\alpha \in [0,1]
\end{equation}
where $\alpha$ is a hyperparameter that weights the turn-wise signal against the token-wise signal. As shown in \cref{fig:advantages}, the hybrid advantage estimation provides fine-grained advantage values for each individual token, while being less discounted by the intermediate tokens. 

We employ the clipping technique in Proximal Policy Optimization (PPO) \cite{schulman2017proximal} to improve training stability. The empirical surrogate loss function derived from \cref{eq_surrogate_hybrid} can be written as
\begin{equation}\label{eq_surrogate_practice}
\hat{L}^{\mathrm{hybrid}} (\theta )=\hat{\mathbb{E}}\left[\sum _{t=0}^{T-1}\sum _{i=0}^{I_{t} -1}\min\left( \rho _{t}^{i}( \theta )\hat{\mathsf{A}}_{t}^{i} ,\text{clip} (\rho _{t}^{i}( \theta ) ,1-\epsilon ,1+\epsilon )\hat{\mathsf{A}}_{t}^{i} \ \right)\right]
\end{equation}
where $\rho_t^i(\theta)\triangleq \pi_\theta( \actionToken_t^i | \trajToken_t^i ) / \pi_{\theta'}( \actionToken_t^i | \trajToken_t^i )$ is the likelihood ratio.

A critical practical consideration is the calculation of the target return used to update this shared value model $\widehat{V}_\psi$. While returns from either the turn or token level could be used, they exhibit significantly different bias-variance profiles. 
Computing the token-wise return following \cref{eq:gae} is straightforward,
\begin{equation}\label{eq_return_token}
\widehat{G}_t^i = \widehat{{A}}_t^i + \widehat{V}_\psi(\tau_t^i)
\end{equation}

Conversely, the turn-wise GAE only provides a single return at the end of the turn. To compute the hybrid return in the middle of the turn, we must broadcast the turn-wise return back to each intermediate token position by summing the discounted rewards from the current token index to the end of the turn:
\begin{equation}\label{eq_return_turn}
 \widehat{\boldsymbol{G}}_t^i = \widehat{\boldsymbol{A}}_t + \sum_{k=i+1}^{I_t} \gammaToken^{k-i} r_k + \widehat{V}_\psi(\boldsymbol{\tau}_t)
\end{equation}

Therefore, as illustrated at the bottom of \cref{fig:algorithm}, we calculate a \emph{hybrid return} to combine the turn-wise and token-wise returns:
\begin{equation}\label{eq:mixed_return}
    \widehat{\mathsf{G}}_t^i = \alpha \widehat{\boldsymbol{G}}_t^i + (1-\alpha) \widehat{G}_t^i,
\end{equation}
We demonstrate in \cref{sec:key} that this specific formulation of the hybrid return is crucial for optimizing the performance of the value model.

Regarding environmental consistency, the token-level rewards should formally aggregate perfectly into the turn-level rewards. In practice, token-level rewards often consist of very small KL-divergence penalties. We found it empirically beneficial \emph{not} to aggregate these micro-rewards into the final turn-level reward $R^{(i)}$. We show both of these design choices in \cref{sec:key}.

The full procedure for calculating these mixed signals is summarized in \Cref{alg:mixed_advantage_training}. 
Once the estimated advantage $\mathsf{A}_t^i$ and return $\mathsf{G}_t^i$ are computed, the policy and value networks are updated via the standard PPO loss.

\subsection{Bias and Variance Analysis}\label{sec_theorem}

We compare the bias and variance of the turn-wise and token-wise advantage estimators.
We define the bias of the token-wise advantage estimator $\hat{A}_{t}^{i}$ as
$
\mathrm{bias}\left(\hat{A}_{t}^{i}\right) =\left\Vert \mathbb{E}\left[\hat{A}_{t}^{i} \mid \tau _{t}^{i} ,a_{t}^{i}\right] -A^{\pi } (\tau _{t}^{i} ,a_{t}^{i} )\right\Vert _{\infty },
$
where $\|\cdot\|_{\infty}$ denotes the infinity norm.
Similarly, we can define $\mathrm{bias}(\hat{\boldsymbol{A}}_{t})$ for the turn-wise estimator.

\begin{theorem}[Bias Comparison]\label{theorem_bias_comparision}
Let $e_{\max} \triangleq \| \hat{V} - V^{\pi} \|_{\infty}$, and $I=\min_t I_t$.
Then
\begin{fontsize}{9}{11}
\begin{equation*}
\mathrm{bias}\!\left(\hat{\boldsymbol{A}}_{t}\right)
\le
\underbrace{\left(
1+\frac{\gamma^{I}(1-\lambda)}{1-\gamma^{I}\lambda}
\right)}_{b_{\mathrm{turn}}}
e_{\max},
\ 
\mathrm{bias}\!\left(\hat{A}_{t}^{i}\right)
=
\underbrace{\left(
1+\frac{\gamma(1-\lambda)}{1-\gamma\lambda}
\right)}_{b_{\mathrm{token}}}
e_{\max},
\ 
b_{\mathrm{turn}} \leq b_{\mathrm{token}}.
\end{equation*}
\end{fontsize}
\end{theorem}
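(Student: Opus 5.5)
Write $\hat V = V^\pi + e$ with $\|e\|_\infty = e_{\max}$, and exploit the standard telescoping structure of GAE.

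\textbf{Decomposing the estimator.} For the token-wise estimator, each TD error splits as
\[
\delta^{(j)} = \delta^{(j)}_{\pi} + \gamma e(\tau^{(j+1)}) - e(\tau^{(j)}),
\]
where $\delta^{(j)}_{\pi}$ is the TD error under the true $V^\pi$. Since $\mathbb{E}[\delta^{(j)}_\pi \mid \tau^{(j)}] = 0$ for $j>0$ and $\mathbb{E}[\delta^{(0)}_\pi \mid \tau,a] = A^\pi$, the $\lambda$-weighted sum of the true-value errors contributes exactly $A^\pi(\tau_t^i,a_t^i)$ in conditional expectation. The bias is therefore the expectation of the error part:
\[
\sum_{k\ge 0}(\gamma\lambda)^k\bigl(\gamma e_{k+1} - e_k\bigr)
= -e_0 + \sum_{k\ge 1}(\gamma\lambda)^{k-1}\gamma(1-\lambda)\, e_k ,
\]
obtained by regrouping the telescoping terms; the boundary term at the horizon is handled by $e=0$ at termination.

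\textbf{Token-wise bias.} Taking sup norms gives
\[
e_{\max}\Bigl(1 + \gamma(1-\lambda)\sum_{k\ge 0}(\gamma\lambda)^k\Bigr) = b_{\mathrm{token}}\, e_{\max}.
\]
To get equality, as the statement asserts, I would note that the bound is attained by the worst-case error function $e$ with $e_0=-e_{\max}$ and $e_k=+e_{\max}$ for $k\ge1$, since bias is defined as a supremum. I would state this explicitly, interpreting the bias as the worst case over admissible $\hat V$ with the given $e_{\max}$.

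\textbf{Turn-wise bias.} Run the same argument at the turn level, using the turn discount $\boldsymbol{\gamma}_t=\gamma^{I_t}$ and the unified value from Theorem~\ref{thm:unified_value_function}, so that the turn values are the token values at turn boundaries and share the same error bound $e_{\max}$. The regrouped weights become $\prod_{s}\boldsymbol\gamma_s\lambda^{k-1}(1-\lambda)$. Bounding each $\boldsymbol\gamma_s\le\gamma^{I}$ with $I=\min_t I_t$ gives
\[
1+\gamma^I(1-\lambda)\sum_k(\gamma^I\lambda)^k = b_{\mathrm{turn}},
\]
which is only an inequality because the turn lengths vary.

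\textbf{Comparison.} It remains to show $b_{\mathrm{turn}}\le b_{\mathrm{token}}$. The map $x\mapsto \frac{x(1-\lambda)}{1-x\lambda}$ has derivative $\frac{1-\lambda}{(1-x\lambda)^2}\ge0$, so it is nondecreasing on $[0,1)$. Since $\gamma^I\le\gamma$, the comparison follows.

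\textbf{Main obstacle.} The delicate step is the regrouping when the horizon is finite and the turn lengths are variable. The truncated geometric sums must still be bounded by the infinite ones; this holds because all weights are nonnegative. The per-turn discount products must also be dominated by $(\gamma^I)^k$. I expect the equality claim for the token-wise bias to need the careful worst-case interpretation described above.
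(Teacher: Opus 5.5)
Your proof of the two upper bounds and of $b_{\mathrm{turn}}\le b_{\mathrm{token}}$ is sound, but it is built differently from the paper's.

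The paper never opens up the estimator. Its steps are:
\begin{itemize}
\item cite a Kearns--Singh-type lemma giving $\bigl(1+\frac{\gamma(1-\lambda)}{1-\gamma\lambda}\bigr)e_{\max}$ for a $\lambda$-return advantage with a constant discount;
\item apply it turn by turn with $\boldsymbol{\gamma}_t=\gamma^{I_t}$ and take $\max_t$, which is attained at $\gamma^{\min_t I_t}$ because the bound increases with the discount;
\item replace $\boldsymbol{e}_{\max}$ by $e_{\max}$ because turn-level histories form a subset of token-level ones (your appeal to the unified-value theorem plays the same role);
\item obtain the token-wise bound from the same lemma;
\item compare constants by noting that the numerator shrinks and the denominator grows when $\gamma$ is replaced by $\gamma^{I}$, which is equivalent to your monotonicity argument.
\end{itemize}

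Your telescoping decomposition $\hat V=V^\pi+e$ proves that lemma from scratch. Your regrouped turn-level weights $\lambda^{k-1}(1-\lambda)\prod_s\boldsymbol{\gamma}_s$ and the pointwise bound $\prod_s\boldsymbol{\gamma}_s\le\gamma^{Ik}$ do more: they justify a step the paper takes for granted. That bound is valid because the products are nonnegative and $I$ is a lower bound on all turn lengths. The step it justifies is that a constant-discount lemma can be used with a random, turn-dependent discount simply by maximizing over turns. The paper's route buys brevity; yours buys a self-contained argument that makes the finite-horizon and variable-discount issues explicit.

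Your convention $e=0$ at termination is essential, not cosmetic. If the final step bootstrapped from an estimated value, the last coefficient would be $(\gamma\lambda)^{K-1}\gamma$ instead of $(\gamma\lambda)^{K-1}\gamma(1-\lambda)$. Already for $K=1$ the bias could then reach $(1+\gamma)e_{\max}>b_{\mathrm{token}}e_{\max}$ when $\gamma,\lambda>0$.

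The one step that fails is the claimed equality for the token-wise bias.
\begin{itemize}
\item In the paper, bias is the sup over $(\tau_t^i,a_t^i)$ for a \emph{given} $\hat V$, and under that definition equality is false in general. An error $e\equiv c$ on non-terminal histories gives bias at most $|c|$ (exactly $\frac{1-\gamma}{1-\gamma\lambda}|c|$ in the infinite-horizon case). This is strictly below $b_{\mathrm{token}}|c|$ for $\gamma>0$, $\lambda<1$.
\item Switching to a worst case over $\hat V$ does not rescue it. Your extremal choice $e_k=+e_{\max}$ for all $k\ge1$ contradicts your own terminal convention. With $K$ remaining tokens the worst case is $\bigl(1+\gamma(1-\lambda)\frac{1-(\gamma\lambda)^{K-1}}{1-\gamma\lambda}\bigr)e_{\max}$, which is strictly smaller than $b_{\mathrm{token}}e_{\max}$ whenever $\gamma>0$ and $0<\lambda<1$. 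So $b_{\mathrm{token}}$ is reached only in the infinite-horizon limit.
\end{itemize}
The paper's own proof also gives only an upper bound here; it says the token-wise bound follows from the lemma ``in a similar manner.'' The statement you can actually prove is $\mathrm{bias}(\hat A_t^i)\le b_{\mathrm{token}}e_{\max}$. Drop the attainment argument: the comparison $b_{\mathrm{turn}}\le b_{\mathrm{token}}$ is a statement about the constants and does not need it.
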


\begin{theorem}[Variance Comparison]\label{theorem_variance_comparision}
Let $\sigma_{\max} \triangleq \sup \mathrm{var}(\delta^{(i)})$, and $I=\max_t I_t$.
Then
\begin{fontsize}{9}{11}
\begin{equation*}
\mathrm{var}\!\left(\hat{\boldsymbol{A}}_{t}\right)
\le
\underbrace{\left(
\frac{1-\gamma^{I}}
{(1-\gamma)(1-\gamma^{I}\lambda)}
\right)^{2}}_{v_{\mathrm{turn}}}
\sigma_{\max},
\ 
\mathrm{var}\!\left(\hat{A}_{t}^{i}\right)
\le
\underbrace{\left(
\frac{1}{1-\gamma\lambda}
\right)^{2}}_{v_{\mathrm{token}}}
\sigma_{\max},
\ 
v_{\mathrm{turn}} > v_{\mathrm{token}}.
\end{equation*}
\end{fontsize}
\end{theorem}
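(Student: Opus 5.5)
Write both advantage estimators as geometric sums of TD errors and bound the variance by Cauchy–Schwarz on the weights. For the token-wise estimator, \cref{eq_advantage_token} gives $\hat{A}_t^i=\sum_{i'}(\gamma\lambda)^{i'}\delta^{(i'+i+I_{<t})}$. For any weights $w_k\ge 0$ we have $\mathrm{var}(\sum_k w_k X_k)\le(\sum_k w_k\sqrt{\mathrm{var} X_k})^2\le(\sum_k w_k)^2\sigma_{\max}$, since covariances are bounded by the product of standard deviations. The geometric sum of weights is bounded by $1/(1-\gamma\lambda)$, which gives $v_{\mathrm{token}}$ directly.

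For the turn-wise estimator, I would express each turn-level TD error $\boldsymbol{\delta}_{t'}$ through token-level TD errors. Use $\boldsymbol{\gamma}_{t'}=\gamma^{I_{t'}}$ from \cref{theorem_gradient}, the unified value of \cref{thm:unified_value_function}, and the token rewards, which are zero except at the last token of the turn. The within-turn sum then telescopes:
\[
\boldsymbol{\delta}_{t'}=\sum_{j=0}^{I_{t'}-1}\gamma^{j}\,\delta^{(I_{<t'}+j)},
\]
up to the convention for the reward placement. Hence the sum of absolute weights carried by one turn TD error is $\sum_{j<I_{t'}}\gamma^j=(1-\gamma^{I_{t'}})/(1-\gamma)\le(1-\gamma^{I})/(1-\gamma)$ with $I=\max_t I_t$.

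The outer GAE weights $(\boldsymbol{\gamma}\lambda)^{t'}$ are then handled turn by turn. Each factor is at most $(\gamma^{I_{\min}}\lambda)^{t'}$, but to reach the stated bound I would bound it by $(\gamma^{I}\lambda)^{t'}$ with $I$ the maximal length. This bound is not uniformly valid in general, so I would either assume equal turn lengths or handle it as the paper's $I$ convention. Summing over turns gives a total weight of at most $\frac{1-\gamma^I}{(1-\gamma)(1-\gamma^I\lambda)}$. Squaring and applying the same Cauchy–Schwarz inequality yields $v_{\mathrm{turn}}\sigma_{\max}$. The main obstacle is this weight bookkeeping with unequal turn lengths: precisely, justifying the uniform replacement of $\gamma^{I_{t'}}$ by $\gamma^{I}$ in both the inner sum and the outer ratio. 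I would first prove the result for equal turn lengths and then argue monotonicity in each $I_{t'}$ for the inner factor.

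Finally, for $v_{\mathrm{turn}}>v_{\mathrm{token}}$, compare the square roots:
\[
\frac{1-\gamma^I}{(1-\gamma)(1-\gamma^I\lambda)}\quad\text{versus}\quad\frac{1}{1-\gamma\lambda}.
\]
Cross-multiplying gives $(1-\gamma^I)(1-\gamma\lambda)>(1-\gamma)(1-\gamma^I\lambda)$. Write $x=\gamma^I$, expand, and simplify to $(\gamma-x)(1-\lambda)... $; more cleanly, the difference equals $(1-\lambda)(\gamma-\gamma^I)\cdot(\text{positive})$ after regrouping. Expanding, the left-hand side minus the right-hand side is $-\gamma^I-\gamma\lambda+\gamma+\gamma^I\lambda=(\gamma-\gamma^I)(1-\lambda)$. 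This is strictly positive for $I\ge2$, $\gamma\in(0,1)$ and $\lambda<1$, and these edge conditions should be stated. For $\lambda=1$ or $I=1$ the two bounds coincide.
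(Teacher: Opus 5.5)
Your route is the paper's: it also telescopes $\boldsymbol{\delta}_t=\sum_{i<I_t}\gamma^i\delta_t^i$. It bounds the variance of a nonnegatively weighted sum of TD errors by the squared total weight times the maximal variance, citing Kearns--Singh where you prove the covariance inequality inline. It ends with the same cross-multiplication giving $(1-\lambda)(\gamma-\gamma^I)$. Your conditions $\lambda<1$ and $\max_t I_t\ge 2$ for strictness are correct; the paper assumes them only inside its proof.

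The gap is the unequal-turn-length step you flag, and your sketched repair does not close it. Monotonicity of the inner factor $(1-\gamma^{I_{t'}})/(1-\gamma)$ is not enough. The outer weights $\lambda^k\prod_{j<k}\gamma^{I_{t+j}}$ shrink as turns get longer, so the two factors move in opposite directions. Bounding them separately gives at best $\frac{1-\gamma^{I_{\max}}}{(1-\gamma)(1-\gamma^{I_{\min}}\lambda)}$, which is strictly larger than the stated constant whenever $\lambda>0$ and $\min_t I_t<\max_t I_t$. The paper's proof has exactly this defect. It routes through $\boldsymbol{\sigma}_{\max}$ and then writes $\gamma^I$, with $I=\max_t I_t$, in place of $\boldsymbol{\gamma}$ in $1/(1-\boldsymbol{\gamma}\lambda)$, which is the wrong direction for an upper bound.

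What closes the gap is monotonicity of the \emph{total} flattened weight, which your single-sum bookkeeping makes available. Put $x_k=\gamma^{I_{t+k}}\in[\gamma^I,1)$. The total weight that $\hat{\boldsymbol{A}}_t$ places on token TD errors is $W=g_0/(1-\gamma)$, where $g_k=(1-x_k)+\lambda x_k\,g_{k+1}$ and $g_k=0$ past the last turn.
\begin{itemize}
\item By backward induction $g_k\in[0,1]$.
\item Hence $g_k=1-x_k(1-\lambda g_{k+1})$ is non-increasing in $x_k$ and non-decreasing in $g_{k+1}$.
\item So $g_0$ is largest when every $x_k=\gamma^I$. There $g_0\le(1-\gamma^I)\sum_k(\lambda\gamma^I)^k\le\frac{1-\gamma^I}{1-\lambda\gamma^I}$.
\end{itemize}
Thus $W\le\frac{1-\gamma^I}{(1-\gamma)(1-\gamma^I\lambda)}$ for arbitrary turn lengths. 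Your weighted-sum variance inequality then yields $v_{\mathrm{turn}}\sigma_{\max}$ with no equal-length assumption. With this step your flattened route proves the stated constant, which the paper's factored route through $\boldsymbol{\sigma}_{\max}$ cannot.

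Finally, pin down the reward convention you left open so the telescoping identity is exact. Take the turn reward to be $\sum_j\gamma^j r_{t'}^j$, and the turn value equal to the token value where the turn starts. Otherwise $\boldsymbol{\delta}_{t'}$ carries a leftover reward term whose variance $\sigma_{\max}$ does not control.
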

The above theorems show that the turn-wise estimator tends to have smaller bias but larger variance compared to the token-wise estimator.
These results reveal a bias--variance tradeoff between the turn-wise and token-wise estimators,
and our hybrid approach balances the two by combining them.
All proofs are provided in Appendix A.

\section{Experiments}

\subsection{Setup}

\paragraph{Environment:}
We evaluate \ours{} across five decision-making benchmarks, following the protocols established in~\cite{vagen,chu2025sft}. These environments require Vision-Language Models (VLMs) to process visual observations and generate natural language actions to interact with the environment.

\begin{itemize}
    \item \textbf{Sokoban}~\cite{sokoban}: Evaluates spatial reasoning and long-term planning. The agent must push boxes onto target positions on a grid using actions: \texttt{up, down, left, right}.
    \item \textbf{FrozenLake(F.Lake)}~\cite{frozenlake}: Tests path planning under constraints. The agent navigates from a start to a goal while avoiding holes, using the same discrete action space as Sokoban.
    \item \textbf{Navigation}~\cite{yang2025embodiedbench,kolve2017ai2}: Assesses visual grounding and instruction following. The agent navigates to a target within 1.5 meters using an 8-action space related to move, turn, and look. We test both direct instructions and complex scenarios involving common-sense reasoning.
    \item \textbf{Primitive Skill}~\cite{tao2024maniskill3,srivastava2025rosetta,nasiriany2022augmenting,hiranaka2023primitive}: Focuses on robotic manipulation. The agent performs tabletop tasks (place, stack, draw, align) using continuous parameterized actions: \texttt{pick(x,y,z), place(x,y,z)}, and \texttt{push(x1,y1,z1,x2,y2,z2)}.
    \item \textbf{VIRL}~\cite{yang2024v}: A real-world navigation task using street-view imagery. The agent follows step-by-step instructions to reach a destination using 10 distinct actions including forward, turn, and stop.
\end{itemize}
Detailed environment statistics, prompt templates, and evaluation metrics are provided in Section 2.1 of the Supplementary Material.

\paragraph{Comparison: }
We compare \ours{} against three categories of models:
\begin{enumerate}
    \item \textbf{Proprietary Models:} Leading closed-source models including GPT-5~\cite{gpt5}, Gemini 2.5 Pro~\cite{gemini25pro}, and Claude Sonnet 4.5~\cite{claude45sonnet}.
    \item \textbf{Frozen Open-Source VLMs:} Top-performing small-scale models.
    \item \textbf{Reinforcement Learning Methods:} We compare the finetuning performance of the Qwen2.5-VL-3B backbone using standard reinforcement learning methods, including Token-PPO~\cite{schulman2017proximal,rlhf}, Turn-PPO~\cite{schulman2017proximal,chen2025reinforcement}, GRPO~\cite{guo2025deepseek}, RL4VLM~\cite{rl4vlm}, and VAGEN-Base~\cite{vagen}.
\end{enumerate}
The Qwen2.5-VL-3B backbone was chosen for RL training due to its robust architecture and extensive community support, facilitating reproducible results. Training hyperparameters for all baselines are detailed in Section 2.2 of the Supplementary Material.

\begin{table}[t]
\centering\scriptsize
\setlength{\tabcolsep}{5pt}
\caption{Comparison of \ours{} with state-of-the-art methods and models on five multi-turn decision-making benchmarks.}
\label{tab:results}
\resizebox{\linewidth}{!}{%
\begin{tabular}{l|cccccccccc|c}\toprule
    \multirow{2}{*}{{Model/Method}} & \multirow{2}{*}{\textbf{Sokoban}} & \multirow{2}{*}{\textbf{F.Lake}} & \multicolumn{2}{c}{\textbf{Navigation}} & \multicolumn{4}{c}{\textbf{Primitive Skill}} & \multicolumn{2}{c|}{\textbf{VIRL}} & \multirow{2}{*}{\textbf{Avg.}} \\
\cmidrule(lr){4-5}\cmidrule(lr){6-9}\cmidrule(lr){10-11}
&  &  & Base & CS & Place & Stack & Draw & Align & ID & OOD &  \\\midrule
\multicolumn{12}{l}{\textit{Proprietary Models}}\\\midrule
GPT-5~\cite{gpt5} & 0.7& 0.77& 0.75& 0.81& \textbf{1}& 0.63& 0& \textbf{1}&  0.06&  0.56& 0.63\\
Gemini 2.5 Pro~\cite{gemini25pro} & 0.58 & 0.78 & 0.63 & 0.63 & 0.63 & 0.63 & 0 & 0.75 &  0.16&  0.39& 0.52\\
Claude Sonnet 4.5~\cite{claude45sonnet} & 0.31 & 0.8 & 0.67 & 0.67 & 0.63 & 0.5 & 0 & \textbf{1} &  0.09&  0.44& 0.51\\\midrule
\multicolumn{12}{l}{\textit{Frozen Open-Source Models}}\\\midrule
Qwen2.5-VL-3B~\cite{qwen25vl} & 0.18& 0.13& 0.34& 0.28& 0& 0& 0& 0&  0&  0& 0.09\\
Gemma-3-4B~\cite{gemmateam2025gemma3technicalreport} &  0.07&  0.05&  0.03&  0.02&  0&  0&  0&  0&  0&  0& 0.02\\
VLM-R1-3B~\cite{shen2025vlm} & 0.16& 0.10& 0.34& 0.15& 0& 0& 0& 0&  0.03&  0& 0.08\\
Cosmos-R1-7B~\cite{cosmosr17b} & 0.14& 0.23&  0.01&  0&  0&  0&  0&  0&  0&  0& 0.04\\
Qwen3-VL-4B~\cite{qwen3vl}   &  0.27&  0.08&  0.01&  0&  0&  0&  0&  0&  0&  0& 0.04\\
\midrule
\multicolumn{12}{l}{\textit{Reinforcement Learning Methods}}\\\midrule
Token-PPO~\cite{rlhf} & 0.59& 0.72& \textbf{0.90}&  0.84& \textbf{1}& \textbf{1}& 0.99& 0.95&  0.16&  0.94& 0.81\\
Turn-PPO~\cite{chen2025reinforcement} & 0.38& 0.70& 0.78& 0.84& 0& 0& 0& \textbf{1}&  0.09& 0.94 & 0.47\\
GRPO~\cite{guo2025deepseek} & 0.20& 0.57& 0.88& 0.81& 0& 0& 0& \textbf{1}& 0.03 &0.28  &0.38 \\
RL4VLM~\cite{rl4vlm}     &  0.20&  0.35& 0.86 & 0.78 &  0&  0.95&  0&  0.98&  0.38&  \textbf{1}& 0.55\\
VAGEN-Base~\cite{vagen} & 0.61& 0.71& 0.78& 0.80& \textbf{1}& 0.88& 0.88& 0.88&  0.06 & 0.83 & 0.74\\
\rowcolor{rowrl!20}\rule{0pt}{2.3ex}\textbf{\ours{}}(Ours) & \textbf{0.83}& \textbf{0.80}& \textbf{0.90}& \textbf{0.86}& \textbf{1}& \textbf{1}& \textbf{1}& 0.99&  \textbf{0.72}&  \textbf{1}& \textbf{0.91}\\\bottomrule
\end{tabular}
}
\end{table}
\subsection{Results}
As illustrated in \cref{tab:results}, proprietary models consistently outperform open-source alternatives by a wide margin. Specifically, while all proprietary models achieve an average score exceeding 0.5 across the five benchmarks, their open-source counterparts struggle to surpass the 0.1 threshold. Notably, even with the evolution of general-purpose small models, progress remains inconsistent. For instance, while transitioning from Qwen2.5-VL-3B to Qwen3-VL-4B yields a localized improvement on Sokoban (rising from 0.18 to 0.27), this gain is overshadowed by a systemic decline across other benchmarks. Consequently, the overall average score drops from 0.09 to 0.04. This disparity implies the critical necessity of multi-turn RL training to instill robust sequential decision-making capabilities in VLMs, bridging the gap toward real-world applications.

Among reinforcement learning methods, token-wise methods such as Token-PPO achieve an average score of 0.81, and VAGEN-Base reaches 0.74, consistently outperforming their turn-wise counterparts. In contrast, turn-level methodologies such as Turn-PPO and RL4VLM exhibit diminished efficacy, yielding average scores of only 0.47 and 0.55, respectively. This decline is primarily attributed to the omission of fine-grained token-level guidance, which is indispensable for stable autoregressive generation. Furthermore, while GRPO has emerged as a cornerstone for single-turn reasoning, its inability to leverage intermediate rewards at the turn level leads to an average performance of 0.38 across the five benchmarks. 

\ours{} achieves SOTA performance across all benchmarks. Specifically, our method reaches 0.83 on Sokoban and 0.80 on Frozen Lake, while maintaining a high average score of 0.88 on Navigation. Remarkably, \ours{} achieves a near-perfect average score of 1.0 on Primitive Skill and a robust 0.86 on the VIRL task. By synergistically leveraging the \textbf{mixed advantage} for precise credit assignment and the \textbf{mixed return} for stabilized value training, our method effectively bridges the gap between high-level reasoning and low-level token generation. Notably, \ours{} achieves a remarkable average score of 0.91, significantly outperforming both token-wise and turn-wise baselines and demonstrating its robustness in complex, multi-turn interaction scenarios.

\begin{figure}[H]
    \centering
    \includegraphics[width=.8\linewidth]{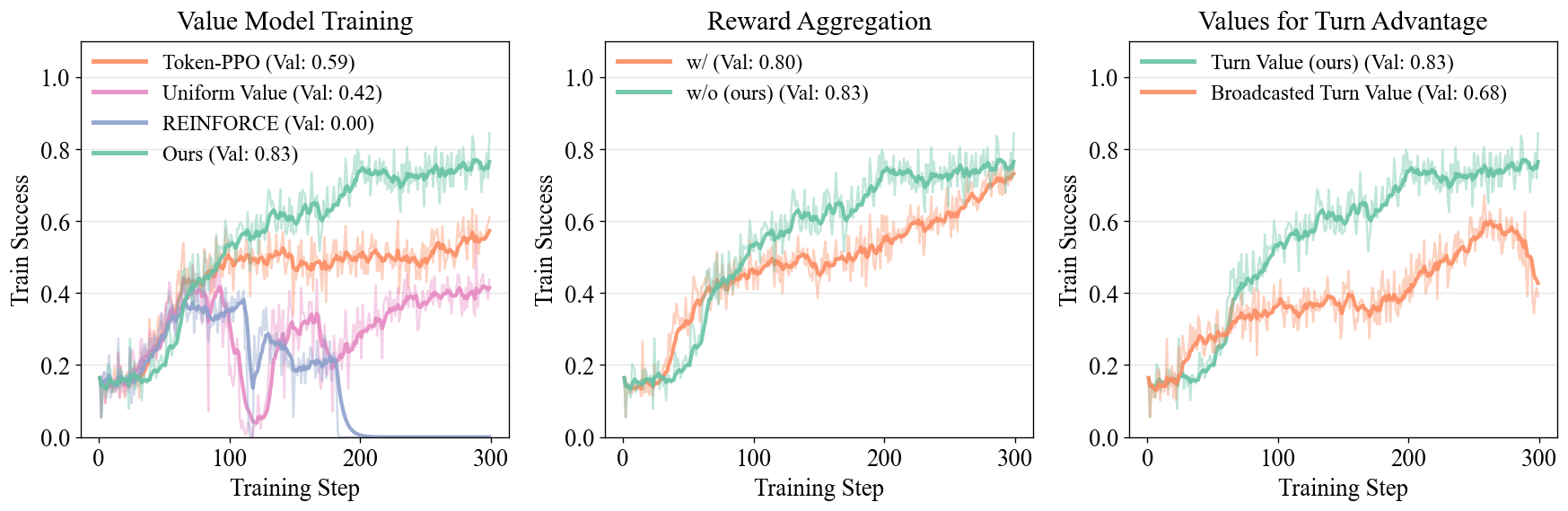}
    \caption{Key design factors for the performance of our algorithm: value model training, advantage aggregation, and value selection for constructing turn-level advantages.}
    \label{fig:key}
\end{figure}
\subsection{Key to the Success of \ours{}}\label{sec:key}
During development, we identify several critical design choices that contribute to the stability and performance of \ours{}. The diagnostic studies in this subsection are conducted on the Sokoban task.

\paragraph{The return must be precise.} 
The formulation of the mixed return in \cref{eq:mixed_return} is critical. We observe that alternative choices, such as broadcasting the turn-level return uniformly across all tokens within a turn or omitting the value model entirely (i.e., using REINFORCE), lead to training collapse, as shown in \cref{fig:key}. This observation aligns with known challenges in PPO for language models, where training the value model is difficult~\cite{shen2024advanced,sheng2024hybridflow}. The formulation of the hybrid return in \cref{eq:mixed_return} provides the precise target and necessary variance reduction to stabilize the value function $V_\psi$ throughout training.

\paragraph{Aggregation of small token-wise rewards is optional.} 
As discussed in \cref{sec:estimation}, strictly speaking, additional token-level rewards, \eg, KL-divergence penalties, should be aggregated into the turn reward when computing the turn-level advantages. This ensures that the system provides exactly the same reward from the token-wise and turn-wise RL perspectives. However, since these rewards are typically small, we show in \cref{fig:key} that aggregating them at the turn level or not leads to similar results.

\paragraph{Which value should be used for the broadcasted turn-wise advantage?}
As discussed in \cref{sec:estimation}, computing the broadcasted turn-wise advantage for intermediate tokens requires selecting an appropriate value estimate. We evaluate two strategies: 
1) using the token-specific value, which provides fine-grained, state-dependent guidance but relies on potentially noisy broadcasted intermediate value; or 
2) using the turn-level value as a constant baseline for all tokens within a turn. 
Our empirical results in \cref{fig:key} show that the theoretically grounded choice of using the turn-level value is sufficiently robust.

\paragraph{Numerical analysis on bias-variance.}
We further examine the bias-variance trade-off on Sokoban by comparing the stability of PPO and \ours{} across runs. PPO obtains a success rate of $0.5797 \pm 0.2105$, whereas \ours{} reaches $0.8222 \pm 0.0711$. This result indicates that \ours{} not only improves the average performance but also substantially reduces variance, suggesting a more stable optimization process.

\subsection{Ablation Study on $\alpha$}\label{sec:ablation_study}

\begin{table}[H]
    \centering
    \scriptsize
    \setlength{\tabcolsep}{9pt}
    \caption{Ablation study on the mixing coefficient $\alpha$.}
    \label{tab:ablationmix}\vspace{-2mm}
    \begin{tabular}{lccccc}
        \toprule
        $\alpha$& 0 & 0.25 & 0.5 & 0.75 & 1 \\
        \midrule
        Sokoban & 0.59 & 0.64 & \textbf{0.83} & 0.78 & 0.38 \\
        FrozenLake & 0.72 & 0.76 & \textbf{0.80} & 0.73 & 0.70 \\
        Primitive Skill (avg.) & 0.98 & 0.73 & \textbf{1.00} & 0.73 & 0.25 \\
        \bottomrule
    \end{tabular}\vspace{-2mm}
\end{table}

We ablate the mixing coefficient $\alpha$ used in both the hybrid advantage in \cref{eq_advantage_hybrid} and the hybrid return in \cref{eq:mixed_return}, where $\alpha$ controls the balance between turn-level and token-level signals. In this setting, $\alpha = 0$ corresponds to Token PPO, $\alpha = 1$ reduces to Turn PPO with broadcasted values, and $\alpha = 0.5$ is our default configuration. As shown in \cref{tab:ablationmix}, the default choice $\alpha = 0.5$ achieves the best performance on Sokoban, FrozenLake, and Primitive Skill. The intermediate mixtures also generally outperform the pure turn-wise setting, suggesting that combining token-level and turn-level credit assignment provides a more reliable training signal.

\subsection{Qualitative Analysis}

We compare  the frozen Qwen2.5-VL and \ours{} model to investigate their decision-making behaviors in \cref{fig:qualitative}. To improve the readability, we modify the output template and highlight the observation, reasoning, prediction, and answer here. The frozen model falls into repetitive and non-adaptive action loops. In both environments, the frozen model consistently outputs the same action across multiple turns, even when the action has no effect on the environment. In contrast, our model demonstrates an active response to environmental feedback: after an initial "Right" action yields no progress, it immediately incorporates this failure and pivots its strategy, correctly executing a "Left" move in the subsequent turn to reach the box. In the VIRL environment, our model explicitly follows the instruction and reaches the destination.

\newcommand{\qualimage}[1]{\raisebox{\dimexpr-\height+\ht\strutbox\relax}{\includegraphics[width=\linewidth]{#1}}}

\begin{figure}[H]
    \centering
    \tiny
    \setlength{\tabcolsep}{2pt}
    \begin{tabular}{p{0.48\linewidth}p{0.48\linewidth}}
        \multicolumn{1}{c}{\textbf{Frozen Qwen2.5-VL}} &
        \multicolumn{1}{c}{\textbf{\ours{}}} \\
        \begin{tabular}[t]{@{}p{0.22\linewidth}p{0.73\linewidth}@{}}
            \qualimage{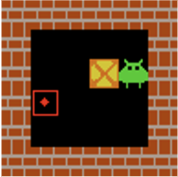} &
            \textbf{Obs.} The box is to the left of the player and the target is directly above the player.
            \textbf{Reasoning.} The player needs to push the box to the target, which is directly above the player.
            \textbf{Pred.} The player will be above the box and target, and the target and box will be at the same row.
            \textbf{Answer.} Down.
        \end{tabular}
        &
        \begin{tabular}[t]{@{}p{0.22\linewidth}p{0.73\linewidth}@{}}
            \qualimage{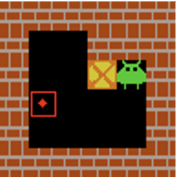} &
            \textbf{Obs.} The box is to the left of the player and the target is to the left of the player.
            \textbf{Reasoning.} The box needs to be moved to the target. The player can move right, up, or down to reach the box.
            \textbf{Pred.} The player will move \textcolor{orange}{right} to reach the box.
            \textbf{Answer.} Right.
        \end{tabular}
        \\
        \begin{tabular}[t]{@{}p{0.22\linewidth}p{0.73\linewidth}@{}}
            \qualimage{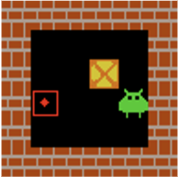} &
            \textbf{Obs.} The box is to the left of the player and the target is directly above the player.
            \textbf{Reasoning.} The player needs to push the box to the target, which is directly above the player.
            \textbf{Pred.} The player will be above the box and target, and the target and box will be at the same row.
            \textbf{Answer.} Down.
        \end{tabular}
        &
        \begin{tabular}[t]{@{}p{0.22\linewidth}p{0.73\linewidth}@{}}
            \qualimage{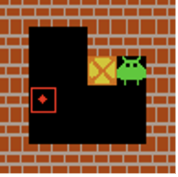} &
            \textbf{Obs.} The box is to the left of the player and the target is to the left of the player.
            \textbf{Reasoning.} The box needs to be moved to the target. The player can move left, up, or down to reach the box.
            \textbf{Pred.} The player will move \textcolor{orange}{left} to reach the box.
            \textbf{Answer.} Left.
        \end{tabular}
        \\
        \begin{tabular}[t]{@{}p{0.22\linewidth}p{0.73\linewidth}@{}}
            \qualimage{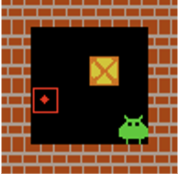} &
            \textbf{Obs.} The box is to the left of the player and the target is directly above the player.
            \textbf{Reasoning.} The player needs to push the box to the target, which is directly above the player.
            \textbf{Pred.} The player will be above the box and target, and the target and box will be at the same row.
            \textbf{Answer.} Down.
        \end{tabular}
        &
        \begin{tabular}[t]{@{}p{0.22\linewidth}p{0.73\linewidth}@{}}
            \qualimage{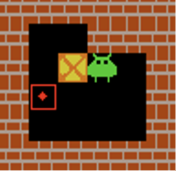} &
            \textbf{Obs.} The box is to the left of the player and the target is to the left of the player.
            \textbf{Reasoning.} The box needs to be moved to the target. The player can move left or down to reach the box.
            \textbf{Pred.} The player will move \textcolor{orange}{left} to reach the box.
            \textbf{Answer.} Left.
        \end{tabular}
        \\[0.15em]
        \multicolumn{2}{c}{\makebox[0.48\linewidth]{\leaders\hbox{--\hspace{0.35em}}\hfill}} \\
        \\[-0.65em]
        \begin{tabular}[t]{@{}p{0.22\linewidth}p{0.73\linewidth}@{}}
            \qualimage{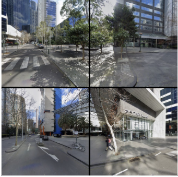} &
            \textbf{Obs.} The scene contains buildings with glass facades, a blue sign, trees, and a sign for ``The Langham, Melbourne.''
            \textbf{Reasoning.} The instruction requires facing northeast, moving forward until the next intersection, and then turning left to face northeast.
            \textbf{Pred.} Continue forward to the next intersection.
            \textbf{Answer.} forward.
        \end{tabular}
        &
        \begin{tabular}[t]{@{}p{0.22\linewidth}p{0.73\linewidth}@{}}
            \qualimage{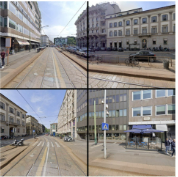} &
            \textbf{Obs.} I am facing east.
            \textbf{Reasoning.} \textcolor{orange}{First instruction is to turn right} to face east.
            \textbf{Pred.} I will turn right.
            \textbf{Answer.} turn\_direction(east), forward().
        \end{tabular}
        \\
        \begin{tabular}[t]{@{}p{0.22\linewidth}p{0.73\linewidth}@{}}
            \qualimage{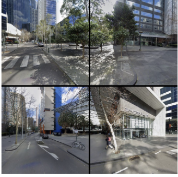} &
            \textbf{Obs.} The scene again shows the same buildings, trees, and road context.
            \textbf{Reasoning.} The model repeats that it should face northeast, move forward until the next intersection, and then turn left.
            \textbf{Pred.} Continue forward to the next intersection.
            \textbf{Answer.} forward.
        \end{tabular}
        &
        \begin{tabular}[t]{@{}p{0.22\linewidth}p{0.73\linewidth}@{}}
            \qualimage{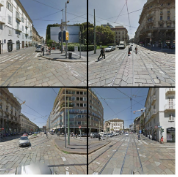} &
            \textbf{Obs.} I am facing east.
            \textbf{Reasoning.} \textcolor{orange}{I have turned right}, now I need to move forward.
            \textbf{Pred.} I will move forward.
            \textbf{Answer.} forward(), turn\_direction(north), forward().
        \end{tabular}
        \\
        \begin{tabular}[t]{@{}p{0.22\linewidth}p{0.73\linewidth}@{}}
            \qualimage{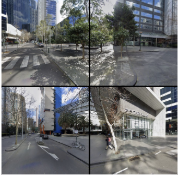} &
            \textbf{Obs.} The scene still contains the same road and building context.
            \textbf{Reasoning.} The model repeats the previous plan to turn left, move forward until reaching the next intersection, and then continue.
            \textbf{Pred.} I should continue forward to the next intersection.
            \textbf{Answer.} forward().
        \end{tabular}
        &
        \begin{tabular}[t]{@{}p{0.22\linewidth}p{0.73\linewidth}@{}}
            \qualimage{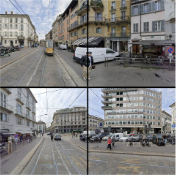} &
            \textbf{Obs.} \textcolor{orange}{I am facing north}.
            \textbf{Reasoning.} I have turned right, now I need to move forward.
            \textbf{Pred.} I will move forward.
            \textbf{Answer.} forward().
        \end{tabular}
    \end{tabular}
    \caption{Qualitative examples on Sokoban (up) and VIRL (bottom) with readable template. The frozen model repeatedly predicts ineffective actions or repeats a static plan, whereas \ours{} updates its actions after receiving feedback and uses previous actions and observations to maintain multi-turn consistency.}
    \label{fig:qualitative}
\end{figure}

\section{Conclusion}
In this paper, we study the problem of training Large Vision-Language Models for multi-turn decision-making tasks. In these tasks, at each turn, the model must reason over intermediate visual states and make decisions based on its internal reasoning. We propose \ours{}, an agentic reinforcement learning algorithm employing the actor-critic framework. Specifically, we introduce a hybrid advantage that combines a lightly discounted turn-level advantage for decision making with a token-level advantage that better supports general language generation, while using a unified critic for efficient training. We theoretically prove the validity of this hybrid formulation and show that the resulting framework achieves both low variance and low bias. In experiments with 3B-scale VLMs across five decision-making benchmarks, our method consistently outperforms prior approaches. We further provide quantitative and qualitative analyses to enable a deeper understanding of our algorithm.

\bibliographystyle{assets/arxiv}
\bibliography{references}
\newpage
\appendix
\section{Theoretical Analysis for Section 4}

\subsection{Formulations}

We illustrate the formulations in \cref{sec:turn_wise_and_token_wise_surrogate}.
For ease of reading, we will write the equations and theorems again as needed and give appropriate reference to the main text.

\subsubsection{Turn-wise POMDP}
\paragraph{Objective Function}
The turn-wise objective can be expressed as follows.

\begin{equation}
J^{\rm turn}(\theta)
=  \mathbb{E}_{\pi_\theta}
\left[   
    \sum_{t=0}^{ T-1 } {\boldsymbol\gamma}_t^t \boldsymbol{r}_t
\right]
\end{equation}

\paragraph{Surrogate Function}
The corresponding surrogate objective is

\begin{equation}\label{app:eq_surrogate_turn}
L^{\mathrm{turn}} (\theta )=\mathbb{E}\left[\sum _{t=0}^{T-1}\sum _{i=0}^{I_{t} -1}\log \pi _{\theta }\left( a_{t}^{i} |\tau _{t}^{i}\right)\boldsymbol{A}^{\pi _{\theta }} (\boldsymbol{\tau }_{t} ,\boldsymbol{a}_{t} )\right]
\end{equation}

\subsubsection{Token-wise POMDP}

\paragraph{Objective Function}
The token-wise objective can be expressed in two equivalent formulations, depending on the indexing scheme.

In the \emph{local indexing formulation}, the reward \( r_t^i \) is indexed {locally within each turn}, where \( t \) denotes the {turn index} and \( i \) denotes the {intra-turn token index}. This means that rewards are treated as belonging to {individual turns}, with each turn having its own sequence of rewards, indexed independently within that turn.

\begin{align}
\text{Local indexing:} \quad & J^{\mathrm{token}}(\theta) \triangleq \mathbb{E}_{\pi_{\theta}}\left[\sum_{t=0}^{T-1} \sum_{i=0}^{I_t - 1} \gamma^{i + I_{< t}} r_t^i \right] \label{eq_token_wise_objective_local}
\end{align}

In the \emph{global indexing formulation}, the reward \( r^{(i)} \) is indexed globally, where \( i \) represents the {global token index} across the entire trajectory. The discount factor is applied uniformly across all tokens in the trajectory.

\begin{align}
\text{Global indexing:} \quad & J^{\mathrm{token}}(\theta) \triangleq \mathbb{E}_{\pi_{\theta}}\left[\sum_{i=0}^{I_{< T} - 1} \gamma^i r^{(i)}\right]
\label{eq_token_wise_objective_global}
\end{align}

\paragraph{Surrogate Function}

The surrogate objective function can be expressed as follows.

For the \emph{local indexing formulation}, where rewards are indexed within each turn:
\begin{equation}\label{app:eq_surrogate_token}
\text{Local indexing:} \quad
L^{\mathrm{token}} (\theta )=\mathbb{E}\left[\sum _{t=0}^{T-1}\sum _{i=0}^{I_{t} -1}\log \pi _{\theta }\left( a_{t}^{i} |\tau _{t}^{i}\right) A^{\pi _{\theta }}\left( \tau _{t}^{i} ,a_{t}^{i}\right)\right]
\end{equation}

For the \emph{global indexing formulation}, where rewards are indexed globally across the entire trajectory:
\begin{equation}\label{eq_surrogate_token_global_index}
\text{Global indexing:} \quad
L^{\mathrm{token}} (\theta ) = \mathbb{E}\left[\sum _{i=0}^{I_{< T} -1}\log \pi _{\theta }\left( a^{(i)} |\tau^{(i)}\right) A^{\pi _{\theta }}\left( \tau^{(i)}, a^{(i)} \right)\right]
\end{equation}

\subsection{Theorems and Proofs}

\begin{lemma}[Policy Gradient Theorem]
The gradient of the objective $J(\theta)$ satisfies
$$\nabla J(\theta) = \nabla L_\text{actor}( \theta )  $$
\end{lemma}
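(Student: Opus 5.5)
We prove the lemma with the standard likelihood-ratio argument, written in the turn-wise notation of \cref{eq_objective} and \cref{eq_policy_gradient}. Throughout, $\nabla L_\text{actor}$ means the gradient with respect to the $\theta$ inside $\log\pi_\theta$, with the advantage held fixed as a weight. This is the usual semi-gradient meaning of a surrogate.

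\textbf{Step 1: score-function form of $\nabla J$.} Write the trajectory distribution as
\[
p_\theta(\boldsymbol{\tau}) = \rho(\boldsymbol{s}_0)\prod_{t}\Omega(\boldsymbol{o}_t\mid\boldsymbol{s}_t)\,\pi_\theta(\boldsymbol{a}_t\mid\boldsymbol{\tau}_t)\,\mathcal{T}(\boldsymbol{s}_{t+1}\mid\boldsymbol{s}_t,\boldsymbol{a}_t)\,\mathcal{R}(\boldsymbol{r}_t\mid\boldsymbol{s}_t,\boldsymbol{a}_t).
\]
Only the policy factors depend on $\theta$, so $\nabla\log p_\theta(\boldsymbol{\tau})=\sum_t\nabla\log\pi_\theta(\boldsymbol{a}_t\mid\boldsymbol{\tau}_t)$. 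In the POMDP this remains valid because the policy conditions only on the observable history $\boldsymbol{\tau}_t$, while the hidden states enter only through $\theta$-independent factors. Exchanging gradient and expectation (the horizon is finite and rewards are bounded) gives
\[
\nabla J(\theta)=\mathbb{E}\Big[\sum_t\nabla\log\pi_\theta(\boldsymbol{a}_t\mid\boldsymbol{\tau}_t)\sum_{t'}\gamma^{t'}\boldsymbol{r}_{t'}\Big].
\]

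\textbf{Step 2: causality.} For $t'<t$, condition on $\boldsymbol{\tau}_t$, which determines $\boldsymbol{r}_{t'}$. Then use $\mathbb{E}_{\boldsymbol{a}\sim\pi_\theta(\cdot\mid\boldsymbol{\tau}_t)}[\nabla\log\pi_\theta(\boldsymbol{a}\mid\boldsymbol{\tau}_t)]=\nabla\sum_{\boldsymbol{a}}\pi_\theta=0$, so these past-reward terms vanish. For $t'\ge t$, condition on $(\boldsymbol{\tau}_t,\boldsymbol{a}_t)$ and apply the tower property. This replaces the future rewards by $\gamma^t\boldsymbol{Q}^{\pi_\theta}(\boldsymbol{\tau}_t,\boldsymbol{a}_t)$.

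\textbf{Step 3: baseline.} Subtract $\boldsymbol{V}^{\pi_\theta}(\boldsymbol{\tau}_t)$, which depends only on $\boldsymbol{\tau}_t$. By the same zero-mean score identity, this does not change the expectation, so $\boldsymbol{Q}$ becomes $\boldsymbol{A}^{\pi_\theta}$. This yields $\nabla L_\text{actor}$. The result is exact if the factor $\gamma^t$ is kept; the usual convention absorbs or drops it, and I would state which convention is used so that the identity holds as written.

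\textbf{Main obstacle.} The delicate point is not the calculus but the bookkeeping. First, one must justify the conditioning in the partially observed setting, that is, that $\boldsymbol{r}_{t'}$ is measurable with respect to $\boldsymbol{\tau}_t$ for $t'<t$. This holds because the paper includes rewards in the history in \cref{eq_traj}. Second, one must handle the discount weighting $\gamma^t$ consistently with how the surrogate in \cref{eq_policy_gradient} is written. I would fix this by interpreting $\boldsymbol{A}^{\pi_\theta}$ in the surrogate with the discount-from-start weighting. The same argument, applied at token granularity, will later give the token-wise and hybrid equalities in \cref{theorem_gradient}.
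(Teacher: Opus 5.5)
Your argument is correct, but it does not follow the paper's route. The paper's proof is a one-line appeal to the policy gradient theorem of Sutton et al.\ (1999). You instead derive the identity from scratch, directly in the POMDP, using the score-function (likelihood-ratio) trick, causality, and a baseline.

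The citation buys brevity, but it leaves two things implicit. First, Sutton et al.'s theorem is stated for MDPs with state-based policies, so one must tacitly pass to the MDP whose states are the histories $\boldsymbol{\tau}_t$. Second, its conclusion weights states by the discounted visitation measure $d^{\pi}(s)=\sum_t\gamma^t\Pr(s_t=s)$, so the expectation in $L_\text{actor}$ must be read under that weighting.

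Your derivation makes both points explicit. In particular, it exposes that the identity fails in general for $\gamma<1$ if the surrogate in \cref{eq_policy_gradient} is taken literally under the on-policy trajectory distribution. The equality holds only once the factor $\gamma^t$ is restored or absorbed. Your derivation also fixes the semi-gradient meaning of $\nabla L_\text{actor}$. It is exactly the computation the paper later invokes informally (``following the same derivation'') for the token-wise case of \cref{theorem_gradient}.

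One small simplification: your causality step does not need rewards to be part of $\boldsymbol{\tau}_t$. Conditioning on the full past, hidden states included, works just as well. The reason is that $\boldsymbol{a}_t$ depends on that past only through $\boldsymbol{\tau}_t$, so the score still has zero conditional mean.
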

\begin{proof}
The result follows directly from the policy gradient theorem
\cite{sutton1999policy}.
\end{proof}

\begin{reptheorem}{theorem_gradient}
Let the discount factors of the turn-wise and token-wise formulations satisfy $\gammaTurn_t = \gammaToken^{I_t}$. Then, for any $\alpha \in [0,1]$ in \cref{eq_surrogate_hybrid}, we have
\[
\nabla J(\theta)
=
\nabla L^{\mathrm{hybrid}}(\theta)
=
\nabla L^{\mathrm{turn}}(\theta)
=
\nabla L^{\mathrm{token}}(\theta).
\]
\end{reptheorem}

\begin{proof}
We first prove that $\nabla L^{\mathrm{turn}}(\theta)=\nabla J(\theta)$. 
As shown in \cref{eq_surrogate_turn},
\[
 L^{\mathrm{turn}}(\theta)
 =
 L_{\text{actor}}(\theta),
\]
which directly yields the result.

Next, we prove that $\nabla L^{\mathrm{token}}(\theta)=\nabla J(\theta)$. 
Note that the objective in \cref{eq_objective} can be rewritten by indexing tokens, see \cref{eq_token_wise_objective_local,eq_token_wise_objective_global}.
Following the same derivation as in \cref{eq_policy_gradient}, we can obtain the token-wise policy gradient based on \cref{eq_token_wise_objective_global}:
\begin{align}
& \nabla J(\theta)  \notag
\\
=& \nabla J^{\rm token}(\theta) \notag
\\
= &
\nabla \mathbb{E}\left[\sum _{i=0}^{I_{< T} -1}\log \pi _{\theta } (a^{( i)} |\tau ^{( i)} )A^{\pi _{\theta }} (\tau ^{( i)} ,a^{( i)} )\right] \notag
\\
= &
\nabla 
\mathbb{E}\left[\sum _{t=0}^{T-1}\sum _{i=0}^{I_{t} -1}\log \pi _{\theta }\left( a_{t}^{i} |\tau _{t}^{i}\right) A^{\pi _{\theta }}\left( \tau _{t}^{i} ,a_{t}^{i}\right)\right] 
\label{eq_temp_asjuqjushisoas}
\\
= &
\nabla 
L^{\rm token}(\theta) \notag
\end{align}
Here, \cref{eq_temp_asjuqjushisoas} is obtained by reindexing the trajectory from token-only indices to mixed turn–token indices.

Finally, we derive the hybrid policy gradient:
\begin{align*}
 & \nabla J(\theta )\\
= & \alpha \nabla L^{\mathrm{turn}} (\theta )+(1-\alpha)\nabla L^{\mathrm{token}} (\theta )\\
= & \nabla \mathbb{E}\left[\sum _{t=0}^{T-1}\sum _{i=0}^{I_{t} -1}\log \pi _{\theta }\left( a_{t}^{i} |\tau _{t}^{i}\right)\left( \alpha \boldsymbol{A}^{\pi _{\theta }}\left(\boldsymbol{\tau }_{t} ,\boldsymbol{a}_{t}\right) +( 1-\alpha ) A^{\pi _{\theta }}\left( \tau _{t}^{i} ,a_{t}^{i}\right)\right)\right]\\
= & \nabla L^{\mathrm{hybrid}} (\theta )
\end{align*}
\end{proof}

\begin{reptheorem}{thm:unified_value_function}
Let the discount factors of the turn-wise and token-wise formulations satisfy $\gammaTurn_t = \gammaToken^{I_t}$. Then the corresponding value functions are consistent at the last token of the trajectory, that is,
\begin{equation}\label{app:eq:value_function_relation}
    \boldsymbol{V}^\pi(\boldsymbol{\tau}_t) = V^\pi(\tau_t^{I_t}) \quad \text{for any turn } t 
    .
\end{equation}
\end{reptheorem}
\begin{proof}

We begin by expressing the turn-wise value function at time step $t$ as follows:
\begin{align}
\boldsymbol{V}^{\pi } (\boldsymbol{\tau }_{t} ) 
& \triangleq \mathbb{E}_{\pi }\left[\sum _{t'=0}^{T-t}\boldsymbol{\gamma }_{t'}^{t'}\boldsymbol{r}_{t+t'} \mid \boldsymbol{\tau }_{t}\right]
\notag
\\
 & 
 =\mathbb{E}_{\pi }\left[\sum _{t'=0}^{T-t} \gamma ^{I_{t'} t'}\boldsymbol{r}_{t+t'} \mid \boldsymbol{\tau }_{t}\right]
 \label{eq_temp_aksjqihwuhhqha}
 \\
 & 
 =\mathbb{E}_{\pi }\left[\sum _{t'=0}^{T-t}\sum _{i=0}^{I_{t} -1}\mathbf{1}_{i=I_{t}} \gamma ^{I_{t'} t'}\boldsymbol{r}_{t+t'} \mid \boldsymbol{\tau }_{t}\right]
 \notag
 \\
 & 
 =\mathbb{E}_{\pi }\left[\sum _{t'=0}^{T-t}\sum _{i=0}^{I_{t} -1} \gamma ^{i+I_{< t'}} r_{t+t'}^{i} \mid \tau _{t}^{I_{t}}\right]
 \label{eq_temp_aqwsssqquhhqha}
 \\
 & =V^{\pi } (\tau _{t}^{I_t} )
 \notag
\end{align}
where $\mathbf{1}$ is the indicator function.
\cref{eq_temp_aksjqihwuhhqha}  follows from the assumption that the discount factor $\boldsymbol{\gamma}_{t'}=\gamma^{I_{t'}}$.
\cref{eq_temp_aqwsssqquhhqha} utilizes the definition of $r_t^i$ as given in the main text.

\end{proof}

\begin{lemma}[Bias Bound for the $\lambda$-Return Advantage Estimator{, adapted from \cite{kearns2000bias}}]
\label{theorem_bias_general}
Let $\boldsymbol{e}_{\max} \triangleq \| \hat{\boldsymbol{V}} - \boldsymbol{V}^{\pi} \|_{\infty} $, then
\begin{equation}
    \mathrm{bias}(\hat{\boldsymbol{A}}_{t}) \leq {\left( 1+\frac{\boldsymbol{\gamma}( 1-\lambda )}{1-\boldsymbol{\gamma} \lambda }\right)} \boldsymbol{e}_{\max}
\end{equation}
\end{lemma}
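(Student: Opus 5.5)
We plan to prove the bound by writing the GAE estimator as a convex combination of $n$-step advantage estimators and bounding the bias of each $n$-step term separately, in the spirit of \cite{kearns2000bias}. Throughout we work at the turn level with discount $\boldsymbol{\gamma}$ and treat $\hat{\boldsymbol{V}}$ as fixed, i.e., not dependent on the sampled trajectory.

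First, recall the standard identity. Define the $n$-step estimator
\[
\hat{\boldsymbol{A}}_t^{(n)} = \sum_{k=0}^{n-1}\boldsymbol{\gamma}^k\boldsymbol{\delta}_{t+k} = \sum_{k=0}^{n-1}\boldsymbol{\gamma}^k\boldsymbol{r}_{t+k} + \boldsymbol{\gamma}^n\hat{\boldsymbol{V}}(\boldsymbol{\tau}_{t+n}) - \hat{\boldsymbol{V}}(\boldsymbol{\tau}_t).
\]
The telescoping of the $\boldsymbol{\delta}$'s gives the first equality. Then $\hat{\boldsymbol{A}}_t = (1-\lambda)\sum_{n\ge1}\lambda^{n-1}\hat{\boldsymbol{A}}_t^{(n)}$; for a finite horizon the leftover weight sits on the Monte Carlo term, which is unbiased up to the $-\hat{\boldsymbol{V}}(\boldsymbol{\tau}_t)$ error. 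Taking $\mathbb{E}[\cdot\mid\boldsymbol{\tau}_t,\boldsymbol{a}_t]$, the true target satisfies
\[
\boldsymbol{A}^\pi(\boldsymbol{\tau}_t,\boldsymbol{a}_t) = \mathbb{E}\Big[\sum_{k<n}\boldsymbol{\gamma}^k\boldsymbol{r}_{t+k} + \boldsymbol{\gamma}^n\boldsymbol{V}^\pi(\boldsymbol{\tau}_{t+n})\Big] - \boldsymbol{V}^\pi(\boldsymbol{\tau}_t).
\]
Subtracting this from the expected $n$-step estimator, the bias of $\hat{\boldsymbol{A}}_t^{(n)}$ equals $\boldsymbol{\gamma}^n\mathbb{E}[(\hat{\boldsymbol{V}}-\boldsymbol{V}^\pi)(\boldsymbol{\tau}_{t+n})] - (\hat{\boldsymbol{V}}-\boldsymbol{V}^\pi)(\boldsymbol{\tau}_t)$. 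By the triangle inequality this is at most $(1+\boldsymbol{\gamma}^n)\boldsymbol{e}_{\max}$.

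Second, combine the terms with the $\lambda$-weights. The $-\hat{\boldsymbol{V}}(\boldsymbol{\tau}_t)$ error is common to all $n$ and has total weight 1, contributing $\boldsymbol{e}_{\max}$. The bootstrap errors contribute
\[
(1-\lambda)\sum_{n\ge1}\lambda^{n-1}\boldsymbol{\gamma}^n\boldsymbol{e}_{\max} = \frac{\boldsymbol{\gamma}(1-\lambda)}{1-\boldsymbol{\gamma}\lambda}\,\boldsymbol{e}_{\max}.
\]
Taking the supremum over $(\boldsymbol{\tau}_t,\boldsymbol{a}_t)$ gives the claimed bound in $\|\cdot\|_\infty$. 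In the finite-horizon case, terminal values are zero and exact, and the truncated geometric sum is dominated by the infinite one, so the bound still holds.

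The main obstacle is bookkeeping rather than analysis. We must justify that the GAE sum with truncation at $T$ is exactly this convex combination, with the tail mass placed on a term carrying no bootstrap error. We must also justify that conditioning on $(\boldsymbol{\tau}_t,\boldsymbol{a}_t)$ is legitimate for a fixed critic. I would verify the truncation by writing $\hat{\boldsymbol{A}}_t$ directly as $\sum_k(\boldsymbol{\gamma}\lambda)^k\boldsymbol{\delta}_{t+k}$. I would then collect the coefficient of each $\hat{\boldsymbol{V}}(\boldsymbol{\tau}_{t+n})$ directly, namely $\boldsymbol{\gamma}^n\lambda^{n-1}(1-\lambda)$, which sidesteps the $n$-step decomposition entirely and makes the bound immediate.
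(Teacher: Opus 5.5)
Your proof is correct and is essentially the argument the paper relies on. The paper's proof is only a pointer to the $\lambda$-return bias analysis of Kearns and Singh, and your decomposition of the GAE estimator into $n$-step estimators (bootstrap error $\boldsymbol{\gamma}^n\boldsymbol{e}_{\max}$ weighted by $(1-\lambda)\lambda^{n-1}$, plus the unit-weight baseline error at $\boldsymbol{\tau}_t$) is that analysis written out. The finite-horizon convention you flag, that the terminal value is zero and exact, is genuinely needed and left implicit in the paper: if the truncated sum bootstrapped from an inexact $\hat{\boldsymbol{V}}(\boldsymbol{\tau}_T)$, that term's coefficient would be $\boldsymbol{\gamma}^{N}\lambda^{N-1}$ with $N=T-t$, rather than $\boldsymbol{\gamma}^{N}\lambda^{N-1}(1-\lambda)$, and the stated constant could be exceeded.
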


\begin{proof}
The result follows from the bias analysis of $\lambda$-returns in \cite{kearns2000bias}.
\end{proof}

\begin{reptheorem}[Bias Comparison]{theorem_bias_comparision}
Let $e_{\max} \triangleq \| \hat{V} - V^{\pi} \|_{\infty}$, and $I=\min_t I_t$, 
then
\begin{fontsize}{9}{11}
\begin{equation*}
\mathrm{bias}\!\left(\hat{\boldsymbol{A}}_{t}\right)
\le
\underbrace{
\left( 1+\frac{\gamma ^{I}( 1-\lambda )}{1-\gamma ^{I} \lambda }\right)
}_{b_{\mathrm{turn}}}
e_{\max},
\ 
\mathrm{bias}\!\left(\hat{A}_{t}^{i}\right)
=
\underbrace{\left(
1+\frac{\gamma(1-\lambda)}{1-\gamma\lambda}
\right)}_{b_{\mathrm{token}}}
e_{\max},
\ 
b_{\mathrm{turn}} \leq b_{\mathrm{token}}.
\end{equation*}
\end{fontsize}
\end{reptheorem}

\begin{proof}

We begin by analyzing the bias of the advantage estimator for the turn-wise trajectory:
\begin{align}
\mathrm{bias} (\hat{\boldsymbol{A}}_{t} ) & \leq \max_t \left( 1+\frac{\boldsymbol{\gamma }_t (1-\lambda )}{1-\boldsymbol{\gamma }_t \lambda }\right)\boldsymbol{e}_{\max} 
\notag
\\
 & =\left( 1+\frac{{\gamma }^{\min_t I_t} (1-\lambda )}{1-{\gamma }^{\min_t I_t} \lambda }\right)\boldsymbol{e}_{\max}  \notag
 \\
 & \leq \left( 1+\frac{{\gamma }^{I} (1-\lambda )}{1-{\gamma }^{I} \lambda }\right) e_{\max} \label{eq_temp_ajsjqhijnnains}
\end{align}
In \cref{eq_temp_ajsjqhijnnains}, we applied the fact that
$
\boldsymbol{e}_{\max} \triangleq \| \hat{\boldsymbol{V}} -\boldsymbol{V}^{\pi } \| _{\infty } \leq \| \hat{V} -V^{\pi } \| _{\infty } =e_{\max}
$, 
since the turn-wise trajectory space is a subset of the token-wise one.

The bias bound $\mathrm{bias}\left(\hat{A}_{t}^{i}\right)$ can be derived in a similar manner, following \cref{theorem_bias_general}.

Given that $0 < \gamma < 1$, we observe that $\gamma^{I} (1 - \lambda) \leq \gamma (1 - \lambda)$. Moreover, since $I > 1$, it holds that $1 - \gamma^{I} \lambda \geq 1 - \gamma \lambda$. Therefore, we conclude that $b_{\mathrm{turn}} \leq b_{\mathrm{token}}$, which completes the proof.

\end{proof}

\begin{lemma}[Variance Bound for $\lambda$-Return Advantage Estimator, adapted from \cite{kearns2000bias}]
\label{theorem_variance_general}
Let $\boldsymbol{\sigma }_{\max} \triangleq \sup \mathrm{var} (\boldsymbol{\delta }_{t} ) $, then
\begin{equation}\label{eq_ijjsihqjsiojq}
   \mathrm{var}(\hat{\boldsymbol{A}}_{t}) \leq \left(\frac{1}{1-\boldsymbol{\gamma } \lambda }\right)^{2} \boldsymbol{\delta }_{\max}
\end{equation}
\end{lemma}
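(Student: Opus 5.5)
The plan is to write the turn-wise GAE estimator from \cref{eq_advantage_turn} explicitly as a weighted sum of TD errors,
\[
\hat{\boldsymbol{A}}_{t}=\sum_{k=0}^{T-t}c_k\,\boldsymbol{\delta}_{t+k},\qquad c_k\triangleq(\boldsymbol{\gamma}\lambda)^{k}\ge 0,
\]
and then bound its variance. All variances are taken conditionally on $(\boldsymbol{\tau}_t,\boldsymbol{a}_t)$, and $\hat{\boldsymbol{V}}$ is held fixed during the estimate. I read the quantity $\boldsymbol{\delta}_{\max}$ on the right-hand side of \cref{eq_ijjsihqjsiojq} as the $\boldsymbol{\sigma}_{\max}$ defined in the statement. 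The only randomness then lies in the rewards and the future contexts, so each $\boldsymbol{\delta}_{t+k}$ is a random variable with $\mathrm{var}(\boldsymbol{\delta}_{t+k})\le\boldsymbol{\sigma}_{\max}$.

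The main obstacle is that the TD errors at different steps are correlated, since they share value estimates and trajectory randomness. So I cannot simply sum the individual variances. I will not assume independence. Instead I will use the fact that the standard deviation is a seminorm on square-integrable random variables, namely the $L^2$ norm of the centered variable. By Minkowski's inequality, equivalently Cauchy--Schwarz applied to the covariance expansion $\mathrm{var}(\sum_k c_k X_k)=\sum_{k,l}c_kc_l\,\mathrm{cov}(X_k,X_l)$ with $|\mathrm{cov}(X_k,X_l)|\le\sqrt{\mathrm{var}X_k\,\mathrm{var}X_l}$, this gives
\[
\sqrt{\mathrm{var}(\hat{\boldsymbol{A}}_{t})}\le\sum_{k=0}^{T-t}c_k\sqrt{\mathrm{var}(\boldsymbol{\delta}_{t+k})}\le\sqrt{\boldsymbol{\sigma}_{\max}}\sum_{k=0}^{T-t}(\boldsymbol{\gamma}\lambda)^{k}.
\]
This worst-case bound covers arbitrary correlation, including perfectly correlated TD errors, which is the case where the bound is tight.

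Finally, since $0\le\boldsymbol{\gamma}\lambda<1$, the finite geometric sum is at most its infinite counterpart, so $\sum_{k=0}^{T-t}(\boldsymbol{\gamma}\lambda)^k\le 1/(1-\boldsymbol{\gamma}\lambda)$. Squaring both sides yields $\mathrm{var}(\hat{\boldsymbol{A}}_{t})\le(1-\boldsymbol{\gamma}\lambda)^{-2}\boldsymbol{\sigma}_{\max}$, which is the claimed bound. The same argument applies verbatim to the token-wise estimator with $\gamma$ in place of $\boldsymbol{\gamma}$.
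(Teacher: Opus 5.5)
Your proof is correct, but it does not follow the paper's route: the paper gives no derivation and simply defers to the $\lambda$-return variance analysis of \cite{kearns2000bias}. You prove the bound directly. You write $\hat{\boldsymbol{A}}_t$ as a nonnegative combination of TD errors, then apply Minkowski's inequality to the standard deviation, which handles arbitrarily correlated $\boldsymbol{\delta}_{t+k}$ without assuming independence, and finally bound the geometric sum. Your version is self-contained and transparent in three ways. It makes explicit that the right-hand side must be read with $\boldsymbol{\sigma}_{\max}$, since the $\boldsymbol{\delta}_{\max}$ in the statement is a typo. It shows that the only hypothesis used is that every $\boldsymbol{\delta}_{t+k}$ has variance at most $\boldsymbol{\sigma}_{\max}$ under the same conditioning as $\hat{\boldsymbol{A}}_t$. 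It also yields the slightly sharper finite-horizon factor $\bigl(1-(\boldsymbol{\gamma}\lambda)^{T-t+1}\bigr)^2/(1-\boldsymbol{\gamma}\lambda)^2$. The paper itself uses the same worst-case device, without comment, in the proof of \cref{theorem_variance_comparision}, where it bounds $\mathrm{var}\bigl(\sum_{i}\gamma^{i}\delta_t^{i}\bigr)$ by $\bigl((1-\gamma^{I_t})/(1-\gamma)\bigr)^{2}\sup\mathrm{var}(\delta_t^{i})$; your argument makes the lemma and that later step rest on one justified principle. The citation route anchors the result in established literature. However, the cited analysis gives error bounds for phased TD($\lambda$) in terms of sampling-deviation terms, not this exact variance inequality, so the ``adaptation'' is left to the reader. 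Your derivation is essentially what fills that in.
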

\begin{proof}
The result follows from the variance analysis of $\lambda$-returns in \cite{kearns2000bias}.
\end{proof}

\begin{reptheorem}[Variance Comparison]{theorem_variance_comparision}
Let $\sigma_{\max} \triangleq \sup \mathrm{var}(\delta^{(i)})$, and $I=\max_t I_t$.
Then
\begin{fontsize}{9}{11}
\begin{equation*}
\mathrm{var}\!\left(\hat{\boldsymbol{A}}_{t}\right)
\le
\underbrace{\left(
\frac{1-\gamma^{I}}
{(1-\gamma)(1-\gamma^{I}\lambda)}
\right)^{2}}_{v_{\mathrm{turn}}}
\sigma_{\max},
\ 
\mathrm{var}\!\left(\hat{A}_{t}^{i}\right)
\le
\underbrace{\left(
\frac{1}{1-\gamma\lambda}
\right)^{2}}_{v_{\mathrm{token}}}
\sigma_{\max},
\ 
v_{\mathrm{turn}} > v_{\mathrm{token}}.
\end{equation*}
\end{fontsize}
\end{reptheorem}

\begin{proof}
We begin by expressing the turn-wise temporal difference as:

\begin{equation}\label{eq_temp_jsnjnqijnsnlkna}
\begin{aligned}
\boldsymbol{\delta}_{t}
&= \boldsymbol{r}_{t} + \boldsymbol{\gamma \hat{V}}(\boldsymbol{\tau}_{t+1}) - \hat{V}(\boldsymbol{\tau}_{t})
 \\
&= r_{t}^{0} + \boldsymbol{\gamma \hat{V}}\!\left(\tau_{t+1}^{1}\right) - \hat{V}\!\left(\tau_{t}^{0}\right)
  + \gamma r_{t}^{1} + \boldsymbol{\gamma^{2}\hat{V}}\!\left(\tau_{t+1}^{2}\right)
  - \gamma \hat{V}\!\left(\tau_{t}^{1}\right) + \cdots
   \\
&= \sum_{i=0}^{I_{t}-1} \gamma^{i}
   \left(
   r_{t}^{i}
   + \boldsymbol{\gamma \hat{V}}\!\left(\tau_{t}^{i+1}\right)
   - \hat{V}\!\left(\tau_{t}^{i}\right)
   \right)
    \\
&= \sum_{i=0}^{I_{t}-1} \gamma^{i} \delta_{t}^{i}
\end{aligned}
\end{equation}

Now, considering the supremum of turn-wise variances, we obtain:

\begin{align}
\boldsymbol{\sigma}_{\max}
&\triangleq \sup \mathrm{var}(\boldsymbol{\delta}_{t})
     \notag
\\
&\leq \sup \mathrm{var}\!\left(\sum_{i=0}^{I_{t}-1} \gamma^{i} \delta_{t}^{i}\right) 
    \label{eq_temp_hjqnsniqhijq}
\\
&\leq \max_t \left(\frac{1-\gamma^{I_t}}{1-\gamma}\right)^{2}
   \sup \mathrm{var}(\delta_{t}^{i}) 
    \notag
\\
&\leq  \left(\frac{1-\gamma^{ I}}{1-\gamma}\right)^{2}
   \sup \mathrm{var}(\delta_{t}^{i}) 
    \notag
\\
&= \left(\frac{1-\gamma^{I}}{1-\gamma}\right)^{2}
   \sup \mathrm{var}(\delta^{(i)}) 
    \notag
\\
&= \left(\frac{1-\gamma^{I}}{1-\gamma}\right)^{2} \sigma_{\max} \notag
\end{align}

\cref{eq_temp_hjqnsniqhijq} follows directly from \cref{eq_temp_jsnjnqijnsnlkna}.

Thus, we derive the variance bound for $\hat{\boldsymbol{A}}_{t}$:

\begin{align}
\mathrm{var} (\hat{\boldsymbol{A}}_{t} )\leq \left(\frac{1}{1-\boldsymbol{\gamma } \lambda }\right)^{2}\boldsymbol{\sigma }_{\max} \leq \left(\frac{1-\gamma ^{I}}{(1-\gamma )(1-\gamma ^{I} \lambda )}\right)^{2} \sigma _{\max}
\end{align}

Next, we can derive the token-wise variance bound $\mathrm{var}\left(\hat{A}_{t}^{i}\right)$ in a similar manner following \cref{theorem_variance_general}.

Finally, we prove that $v_{\mathrm{turn}}  >v_{\mathrm{token}}$. To do so, we compare the following two expressions under the conditions $0<\gamma,\lambda<1$ and $I>1$:

\[
\frac{1-\gamma ^{I}}{(1-\gamma)\left(1-\gamma^{I}\lambda\right)}
\quad \text{and} \quad
\frac{1}{1-\gamma\lambda}
\]

Since the denominators are positive, we compare the cross products:

\begin{align*}
\frac{1-\gamma ^{I}}{(1-\gamma)\left(1-\gamma^{I}\lambda\right)}
>
\frac{1}{1-\gamma\lambda}
\quad
\Longleftrightarrow
\quad
(1-\gamma^{I})(1-\gamma\lambda)
>
(1-\gamma)(1-\gamma^{I}\lambda).
\end{align*}

Expanding both sides:

\begin{align*}
(1-\gamma^{I})(1-\gamma\lambda)
&=1-\gamma\lambda-\gamma^{I}+\gamma^{I+1}\lambda, \\
(1-\gamma)(1-\gamma^{I}\lambda)
&=1-\gamma-\gamma^{I}\lambda+\gamma^{I+1}\lambda.
\end{align*}

Taking the difference:

\begin{align*}
&(1-\gamma^{I})(1-\gamma\lambda)
-
(1-\gamma)(1-\gamma^{I}\lambda) \\
&= \gamma-\gamma\lambda+\gamma^{I}\lambda-\gamma^{I} \\
&=(1-\lambda)(\gamma-\gamma^{I}).
\end{align*}

Since $0<\gamma<1$ and $I>1$, it follows that $\gamma^{I}<\gamma$, and since $1-\lambda>0$, we conclude that:

\[
(1-\lambda)(\gamma-\gamma^{I})>0.
\]

Therefore, we have:

\[
\frac{1-\gamma ^{I}}{(1-\gamma)\left(1-\gamma^{I}\lambda\right)}
>
\frac{1}{1-\gamma\lambda}.
\]

Thus, we confirm that $v_{\mathrm{turn}} > v_{\mathrm{token}}$.

\end{proof}

\section{Experiment Details}
\subsection{Environment}

In the rollout stage, we first input the system prompt and the initial observation into the VLM. The VLM then generates the thinking tokens and action tokens. The environment executes the action tokens and returns the reward along with a step penalty.

In the next turn, we provide the previous chat history together with the new observation as a multi-turn conversation input to the VLM. In other words, the input string for turn $i$ after applying the chat template is
\begin{lstlisting}[language=Python]
"""
<|im_start|>system
You are a ... 
<|im_end|>
<|im_start|>user
The initial observation is: ...
<|im_end|>
<|im_start|>assistant
<think> ... </think><action> ... </action>
<|im_end|>
...
<|im_start|>user
The new observation is: ...
<|im_end|>
<|im_start|>assistant
"""
\end{lstlisting}
For all environments, we adopt the grounding-world-modeling answer format proposed in \cite{vagen}.

\cref{tab:env_stats} summarizes the statistics of the environments. Below, we provide a detailed description of each environment.
\begin{table}[t]
    \centering\scriptsize
    \caption{Statistics of the environments. }
    \label{tab:env_stats}
    \begin{tabular}{lccccc}
        \toprule
        \textbf{Name} & \textbf{\#Actions} & \textbf{Max Turns}& \textbf{Train Size} & \textbf{Test Size} & \textbf{Eval Tasks}\\
        \midrule
        Sokoban & 4 & 3 & 10K & 128 & Base \\
        FrozenLake & 4 &3  & 10K & 128 & Base \\
        Navigation & 8 & 4 &  10K& 128 $\times$ 2& Base, Common Sense \\
        Primitive Skill & - & 3 & 10K $\times$ 4 & 128 $\times$ 4& Place, Stack, Draw, Align \\
        VIRL & 10 & 7 & 1K & 32+18& In-Domain, Out-of-Domain \\\bottomrule
        \end{tabular}
\end{table}
\begin{table}\centering\small
    \caption{Details of the reward. }\label{tab:env_reward}
    \begin{tabular}{lccccc}                                                                       
        \toprule                                   
        \textbf{Name} & \textbf{Format} & \textbf{Step Penalty} & \textbf{Progress} &
    \textbf{Success} \\
        \midrule
        Sokoban      & +0.5 & $-0.1$/step & +1 on target, $-1$ off target & +10 \\
        FrozenLake   & +0.5 & -          & -              & +10  \\
        Navigation   & +0.5 & -          & -               & +10  \\
        Primitive Skill & +0.5 & $-0.1$/step & +2 / stage & +10 \\
        VIRL         & +0.3 & -          & +3.0/keypoint, +0.8/correct action    & +10   \\
        \bottomrule
    \end{tabular}
\end{table}

\paragraph{Sokoban} 
We adopt the Sokoban environment originally introduced in \cite{sokoban} and implemented by \cite{vagen}. The goal of the task is to move the player so that the boxes are pushed to their target positions. The action space is 
\begin{lstlisting}[language=Python]
    Left # Move the player to the left by one step
    Down # Move the player down by one step
    Right # Move the player right by one step
    Up # Move the player up by one step
\end{lstlisting} 
The environment executes any valid action until the player reaches the target position or the maximum number of turns (3) is reached. At each turn, the environment returns a format reward and a step penalty. At the end of the episode, the environment returns a success reward to the agent. The reward details are provided in \cref{tab:env_reward}. 
During evaluation, we report the agent’s success rate. 
The training set contains 10,000 episodes, and the test set contains 128 episodes, both of which are generated by the same \texttt{gym} environment with different random seeds. Below, we show an example of the first turn of the Sokoban environment in both the training and evaluation stages.

\begin{academicprompt}{Sokoban Example}
    \textbf{System Prompt} \\[1ex]
    You are a Sokoban solver.\\
\textcolor{promptbrown}{Sokoban Quick Guide}\\
Goal: Push all boxes onto targets.\\
Symbols:\\
If there is a image provided, the red square with center point is the target, the yellow squre with red cross is the box, the green monster is the player.
If there is no image provided, the symbols are:\\
\# Wall | \_ Floor | O Target | X Box | P You | $\surd$ Box on Target | S You on Target\\
Rules:\\
1. Push boxes (can't pull).\\
2. Avoid walls.\\
\textcolor{promptbrown}{Actions you can take}: Left, Down, Right, Up.\\
You can take up to 3 action(s) at a time, separated by ,.\\
You should first give the description of your observation, then your reasoning, then predict the next state, and finally your answer.\\
For the content you provide within the `<observation>' and `<prediction>' tags, you must strictly describe the relative position of the `target'(red square with center point) and any visible `box'(yellow squre with red cross) objects **relative to the player**.  Only black positions are movable positions.  Use ONLY the terms `above', `below', `left', `right', or `same' for describing these relationships. Do not repeat previous generations. ONLY look at the most recent image.\\
\textcolor{promptbrown}{Your response should be in the format of:}\\
<think><observation> ... </observation><reasoning> ... </reasoning><prediction> ... </prediction></think><answer> ... </answer>\\
\textcolor{promptbrown}{e.g.} <think><observation>The box is below and to right of the player and the target is below and to the right of the player</observation><reasoning>I need to go right then go down to push the box down to the target</reasoning><prediction>The player will be above the box and target, the target and box will be at the same row.</prediction></think><answer>Right,Down</answer> 
    \promptdashline
    \begin{minipage}[c]{0.80\linewidth}
        \textbf{User Prompt} \\[1ex]
        {[Initial Observation]:} \\
        <image>\\
        Decide your next action(s).
    \end{minipage}%
    \hspace{1em}%
    \begin{minipage}[c]{0.15\linewidth}
        \centering
        \includegraphics[width=\linewidth]{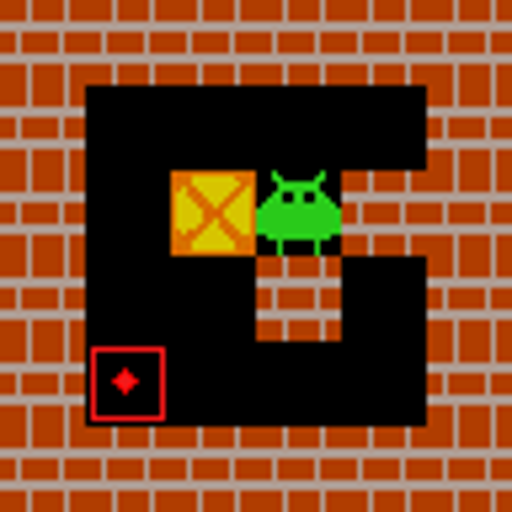}
    \end{minipage}
    \promptsolidline
    \textcolor{promptblue}{%
    \textbf{Model Output} \\[1ex]
    <think>The player is above a wall, and there is a yellow box with a red cross below the player. The target is directly above the box
<reasoning>The player needs to move to the left to push the box down to the target. The wall blocks movement directly right.</reasoning>
<prediction>Left,Down</prediction></think>
<answer>Left,Down</answer>
    }\\
    \promptsolidline
    \begin{minipage}[c]{0.80\linewidth}
        \textbf{User Prompt} \\[1ex]
        After your answer, the extracted valid action is [].\\
After that, the observation is:\\
<image>\\
Decide your next action(s).
    \end{minipage}
    \hspace{1em}%
    \begin{minipage}[c]{0.15\linewidth}
        \centering
        \includegraphics[width=\linewidth]{images/env_example/sokoban.png}
    \end{minipage}
    \promptsolidline
    ...
\end{academicprompt}

\paragraph{FrozenLake} 
We adopt the FrozenLake environment originally introduced in \cite{frozenlake} and implemented by \cite{vagen}. The goal of the task is to navigate from the start position to the goal while avoiding holes. The action space is
\begin{lstlisting}[language=Python]
    Left # Move the player to the left by one step
    Down # Move the player down by one step
    Right # Move the player right by one step
    Up # Move the player up by one step
\end{lstlisting}
The environment executes any valid action until the player reaches the goal or the maximum number of turns (3) is reached. At each turn, the environment returns a format reward. At the end of the episode, the environment returns a success reward to the agent. The reward details are provided in \cref{tab:env_reward}.
During evaluation, we report the agent’s success rate. 
The training set contains 10,000 episodes, and the test set contains 128 episodes, both of which are generated by the same \texttt{gym} environment with different random seeds. Below, we show an example of the first turn of the FrozenLake environment in both the training and evaluation stages.

\begin{academicprompt}{FrozenLake Example}
    \textbf{System Prompt} \\[1ex]
    You are a FrozenLake solver. \\
\textcolor{promptbrown}{FrozenLake Quick Guide}\\
Goal: Reach the goal (G).\\
Symbols (If image is provided there are no symbols):
\_ Frozen | O Hole | G Goal | P Player | X Player fell into hole | $\surd$  Player on goal\\
Rules:\\
1. Avoid falling into holes.\\
2. Frozen tiles are slippery, you may move perpendicular to your intended direction.\\
\textcolor{promptbrown}{Actions you can take}: Left, Down, Right, Up. \\
You can take up to 3 action(s) at a time, separated by ,.\\
You should first describe the observation, then your reasoning, then predict the next state, and finally your answer.\\
For the content you provide within the `<observation>' and `<prediction>' tags, you must strictly describe the relative position of the `target' (the gift box) and any visible `hole' (blue circles) objects **relative to the player**. Your description/prediction must include **both** a vertical and a horizontal directional relationship for each object. Use ONLY the terms `above', `below', `left', `right', or `same' for describing these relationships.\\
\textcolor{promptbrown}{Your response should be in the format of:}\\
<think> <observation> ... </observation> <reasoning> ... </reasoning> <prediction> ... </prediction> </think> <answer> ... </answer>\\
\textcolor{promptbrown}{e.g.} <think><observation>The player is above and on the right side of target. There is a hole below and at the left of the player</observation><reasoning>I should go down twice first to reach th same row as the target</reasoning><prediction>The player will be in the same row and to the right of the target.</prediction></think><answer>Down,Down</answer>\\
\promptdashline
\begin{minipage}[c]{0.80\linewidth}
    \textbf{User Prompt} \\[1ex]
    {[Initial Observation]:} \\
    <image>\\
    Decide your next action(s).\\
\end{minipage}
\hspace{1em}%
\begin{minipage}[c]{0.15\linewidth}
    \centering
    \includegraphics[width=\linewidth]{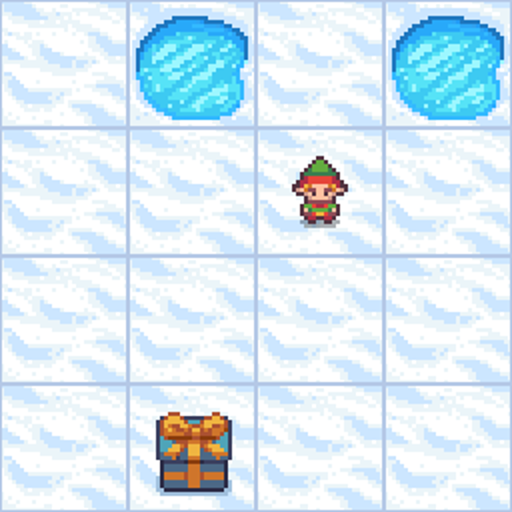}
\end{minipage}
\promptsolidline
\textcolor{promptblue}{
    \textbf{Model Output} \\[1ex]
    <think><observation>The player is to the right of the goal (G), and there is a hole directly above the player.</observation><reasoning>The player should first move to the same row as the goal to potentially reach it without falling into the hole.</reasoning><prediction>The player should move down, then to the right.</prediction></think><answer>Down, Right</answer>
}
\promptsolidline
\begin{minipage}[c]{0.80\linewidth}
    \textbf{User Prompt} \\[1ex]
    After your answer, the extracted valid action is [Down, Right].\\
After that, the observation is:\\
<image>\\
Decide your next action(s).
    \end{minipage}
    \hspace{1em}%
    \begin{minipage}[c]{0.15\linewidth}
        \centering
        \includegraphics[width=\linewidth]{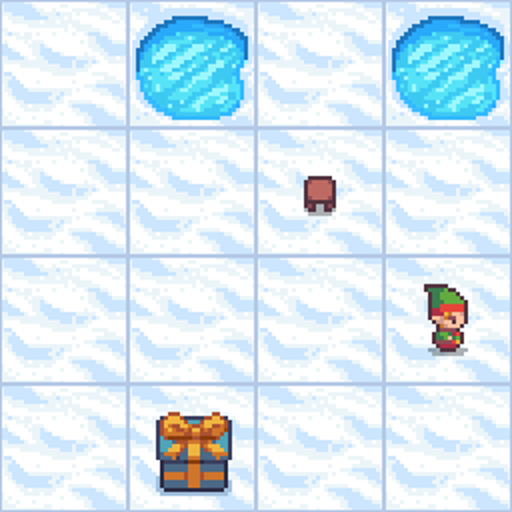}
    \end{minipage}
    \promptsolidline
    ...
\end{academicprompt}

\paragraph{Navigation} 
We adopt the Navigation environment originally introduced in \cite{yang2025embodiedbench} and implemented by \cite{vagen}. The goal of the task is to navigate to the target within 1.5 meters using an 8-action space related to movement, rotation, and camera control. The action space is
\begin{lstlisting}[language=Python]
    moveahead # Move forward by some distance 
    moveback # Move backward by some distance 
    moveright # Move rightward by some distance 
    moveleft # Move leftward by some distance 
    rotateright # Rotate to the right by 90 degrees 
    rotateleft # Rotate to the left by 90 degrees 
    lookup # Tilt the camera upward by 30 degrees 
    lookdown # Tilt the camera downward by 30 degrees 
\end{lstlisting}
The environment executes any valid action until the agent reaches the target or the maximum number of turns (4) is reached. At each turn, the environment returns a format reward. At the end of the episode, the environment returns a success reward to the agent. The reward details are provided in \cref{tab:env_reward}. 

During evaluation, we report the agent’s success rate on two tasks: Base and Common Sense. In the Base task, the agent is given a direct instruction to navigate to the target, which matches the training setting. In the Common Sense task, the agent is given the same scenario but with a different instruction that requires common-sense reasoning to identify the target. \cref{tab:env_nav_comparison} shows a comparison of the instructions for the two tasks under the same scenario.
\begin{table}[h]
    \centering\scriptsize
    \caption{Comparison of Base and Common Sense tasks in the Navigation environment. }\label{tab:env_nav_comparison}
    \begin{tabular}{p{0.48\textwidth}|p{0.48\textwidth}}
        \textbf{Base Instruction} & \textbf{Common Sense Instruction}  \\
        \midrule
        navigate to the Bread in the room and be as close as possible to it & I'm looking for freshly baked loaves that can be sliced for sandwiches or toast. Can you
        navigate to that object and stay close? \\\midrule
        navigate to the Pot in the room and be as close as possible to it & I need a large cooking vessel to prepare a hearty soup. Please navigate to that object and
        stay near it. \\\midrule
        navigate to the Toaster in the room and be as close as possible to it & I'm looking for a kitchen appliance that can brown and crisp slices of bread. Can you
        navigate to that object and stay close? \\
    \end{tabular}
    \end{table}
The training set contains 10,000 episodes, and the test set contains 128 episodes for each task. 
Below, we show an example of the first turn of the Navigation environment in both the training and evaluation stages.
\begin{academicprompt}{Navigation Example}
    \textbf{System Prompt} \\[1ex]
    You are a home robot and perform navigation tasks according to instructions. \\
\textcolor{promptbrown}{Actions you can take}: moveahead, moveback, moveright, moveleft, rotateright, rotateleft, lookup, lookdown.  \\
moveahead: Move forward by some distance \\
moveback: Move backward by some distance \\
moveright: Move rightward by some distance \\
moveleft: Move leftward by some distance \\
rotateright: Rotate to the right by 90 degrees
rotateleft: Rotate to the left by 90 degrees \\
lookup: Tilt the camera upward by 30 degrees \\
lookdown: Tilt the camera downward by 30 degrees\\
\textcolor{promptbrown}{Rewards:}\\
Format correct: +0.5 \\
Achieve the human instruction: +10.0 \\
The instruction will be provided with each observation. Look at the image carefully and navigate to complete the instruction.\\
\textcolor{promptbrown}{Hints:}\\
1. You can take multiple actions at a time, in most cases, if you find the target object is far away from you, you can call moveahead, moveleft and move right multiple times.\\
2. If you find yourself seems to be stuck, you can lookdown to see if there's any object above or below you, you can also rotate to see if there's any object behind you.\\
\textcolor{promptbrown}{Example:}\\
Round 1:\\
<image>\\
<think><observation>There is a garbage can in the upper left corner of the image, next to the kitchen sink. To move there, we can go forward-left, but since there's a kitchen counter directly ahead, we should go left first.</observation><reasoning>Following the strategy, I can go by first moving leftward.</reasoning><prediction>I will be infront of the garbage</prediction></think>
<answer>moveleft, moveleft</answer>
Round 2:\\
Env feedback: Last action is executed successfully.
<image>\\
<think><observation>From the scene, I see that by moving leftward, we are getting closer to the garbage can. Now, the garbage can is in front of me, slightly to the left. And there's a large area ahead of us.</observation><reasoning>Following the strategy, I can go by first moving forward then moving leftward.</reasoning><prediction>I will be closer to the garbage</prediction></think>\\
<answer>moveahead, moveahead,moveahead,moveleft</answer>\\
Round 3:\\
Env feedback: Last action is executed successfully.\\
<image>\\
<think><observation>From the image we can see the garbage can is very close to us, still to our front-left. Moving leftward might be blocked but i can see that there is still space in front of me to get closer to the garbage can.</observation><reasoning>Following the strategy, we can take about two steps forward then one step left to reach the garbage can.</reasoning><prediction>I will reach the garbage</prediction></think>
<answer>moveahead, moveahead,moveleft</answer>\\
Round 4:\\
Env feedback: Success \\
You can take up to 5 action(s) at a time, separated by ','.\\
You should first give your thought process with the your observation, reasoning, and prediction of next state, then your answer.\\
Both the observation and prediction should describe what you see or expect to see in the environment.\\
\textcolor{promptbrown}{Your response should be in the format of:}\\
<think><observation> ... </observation><reasoning> ... </reasoning><prediction> ... </prediction></think><answer> ... </answer>\\
\textcolor{promptbrown}{e.g.} <think><observation>I am at the entrance of a bedroom. There is a bed to the left, a desk with a lamp on the right, and a closet straight ahead. The target object, a book, appears to be on the desk.</observation><reasoning>I need to move toward the desk to reach the book. I'll turn right and move forward.</reasoning><prediction>I am now standing in front of the desk. The desk has a lamp, a computer, and several books on it. The target book is within reach on the right side of the desk.</prediction></think><answer> rotateright,moveahead,moveahead </answer>\\
\promptdashline
\begin{minipage}[c]{0.80\linewidth}
    \textbf{User Prompt} \\[1ex]
    <image>\\
    Human Instruction: I am looking for a luminous device on my desk to read books at night. Can you navigate to that object and stay close?\\
    Decide your next action(s).
\end{minipage}
\hspace{1em}%
\begin{minipage}[c]{0.15\linewidth}
    \centering
    \includegraphics[width=\linewidth]{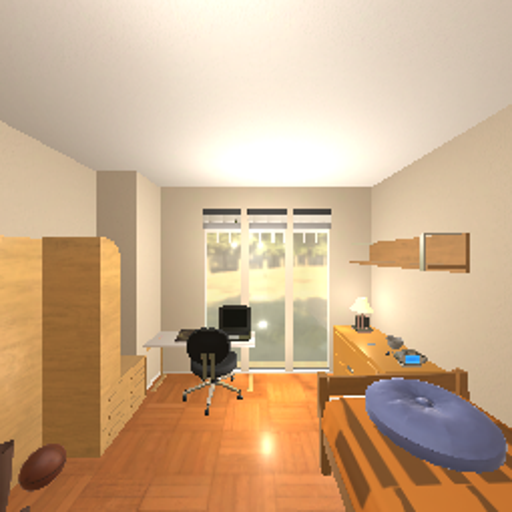}
\end{minipage}
\promptsolidline
\textcolor{promptblue}{
    \textbf{Model Output} \\[1ex]
    <think><observation>There is a laptop and a phone on the desk. The phone is slightly to the right and lower than the laptop. The luminous device is the phone.</observation><reasoning>Since the task is to find a luminous device on the desk, and the observation shows the phone, I should move towards the desk to get closer to the phone.</reasoning><prediction>I will be near the phone on the desk.</prediction></think><answer>moveleft, moveleft, moveleft, moveleft</answer>
}\\
\promptsolidline
\begin{minipage}[c]{0.80\linewidth}
    \textbf{User Prompt} \\[1ex]
    After your answer, the extracted valid action is [moveleft, moveleft, moveleft, moveleft].\\
    The environment feedback is: Last action is not executed successfully.\\
    reward: 0.5 \\
done: 0.0 \\
After that, the observation is:\\
<image>\\
Human Instruction: I am looking for a luminous device on my desk to read books at night. Can you navigate to that object and stay close?\\
    Decide your next action(s).\\
\end{minipage}
\hspace{1em}%
\begin{minipage}[c]{0.15\linewidth}
    \centering
    \includegraphics[width=\linewidth]{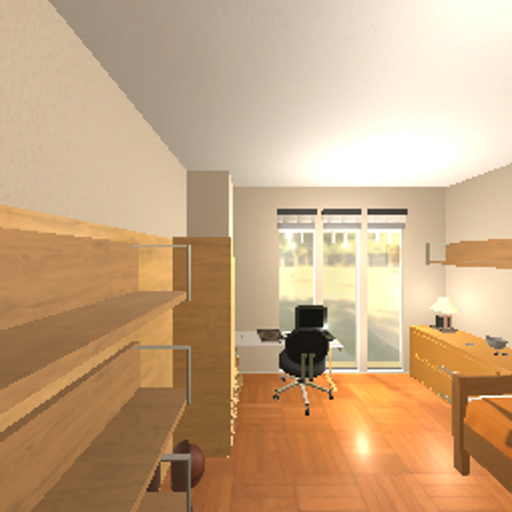}
\end{minipage}
\promptsolidline
...
\end{academicprompt}

\paragraph{Primitive Skill} 
We adopt the Primitive Skill environment originally introduced in \cite{hiranaka2023primitive} and implemented by \cite{vagen}. The goal of the task is to control a robot arm to accomplish several manipulation skills. The action space is
\begin{lstlisting}[language=Python]
    pick(x, y, z) # To grasp an object located at position(x,y,z) in the robot's workspace.
    place(x, y, z) # To place the object currently held by the robot's gripper at the target position (x,y,z).
    push(x1, y1, z1, x2, y2, z2) # To push an object from position (x1,y1,z1) to (x2,y2,z2).
\end{lstlisting}
The environment executes any valid action until the agent completes the task or the maximum number of turns (3) is reached. At each turn, the environment returns a format reward. At the end of the episode, the environment returns the success reward together with the step penalty and stage reward. The reward details are provided in \cref{tab:env_reward}. The four task variants are compared in \cref{tab:env_primitive_skill_comparison}.
During evaluation, we report the agent’s success rate on four tasks: Place, Stack, Draw, and Align.
\begin{table}[h]
    \centering\scriptsize
    \caption{Comparison of the four tasks in the Primitive Skill environment. }\label{tab:env_primitive_skill_comparison}
    \begin{tabular}{p{0.08\textwidth}|p{0.88\textwidth}}
        \textbf{Task} & \textbf{Example Instruction}  \\
        \midrule
        Place & Please place red cube at left target and green cube at right target.\\\midrule
        Stack & Please stack the red cube on top of the green cube, and then stack purple  cube on top of the red cube. \\\midrule
        Draw & Please put the apple in the drawer and close the drawer\\\midrule
        Align & Please align the cubes in the y-axis, which means the x-coordinates of both  cubes should be 0 (+-10mm)  
    \end{tabular}
    \end{table}
The training set is a mix of all four tasks, each containing 10,000 episodes. The test set contains 128 episodes for each task.
Below, we show an example of the first turn of the Primitive Skill environment in both the training and evaluation stages.
\begin{academicprompt}{Primitive Skill Example}
    \textbf{System Prompt} \\[1ex]
    You are an AI assistant controlling a Franka Emika robot arm. Your goal is to understand human instructions and translate them into a sequence of executable actions for the robot, based on visual input and the instruction.

{Action Space Guide}

\textcolor{promptbrown}{You can command the robot using the following actions:}

1. pick(x, y, z) \# To grasp an object located at position(x,y,z) in the robot's workspace.

2. place(x, y, z) \# To place the object currently held by the robot's gripper at the target position (x,y,z).

3. push(x1, y1, z1, x2, y2, z2) \# To push an object from position (x1,y1,z1) to (x2,y2,z2).

\textcolor{promptbrown}{Hints:} 

1. The coordinates (x, y, z) are in millimeters and are all integers.

2. Please ensure that the coordinates are within the workspace limits.

3. The position is the center of the object, when you place, please consider the volume of the object. It's always fine to set z much higher when placing an item.

4. We will provide the object positions to you, but you need to match them to the object in the image by yourself. You're facing toward the negative x-axis, and the negative y-axis is to your left, the positive y-axis is to your right, and the positive z-axis is up. 

\textcolor{promptbrown}{Examples:}

round1:

image1

Human Instruction: Put red cube on green cube and yellow cube on left target

Object positions:

[(62,-55,20),(75,33,20),(-44,100,20),(100,-43,0),(100,43,0)]

Reasoning: I can see from the picture that the red cube is on my left and green cube is on my right and near me. 

Since I'm looking toward the negative x axis, and negative y-axis is to my left, (62,-55,20) would be the position of the red cube, (75,33,20) would be the position of the green cube and (-44,100,20) is the position of the yellow cube. 

Also the (100,-43,0) would be the position of the left target, and (100,43,0) would be the position of the right target.

I need to pick up red cube first and place it on the green cube, when placing, I should set z much higher.

Answer: pick(62,-55,20)|place(75,33,50)

round2:

image2

Human Instruction: Put red cube on green cube and yellow cube on left target

Object positions:

[(75,33,50),(75,33,20),(-44,100,20),(100,-43,0),(100,43,0)]

Reasoning: Now the red cube is on the green cube, so I need to pick up the yellow cube and place it on the left target.

Answer: pick(-44,100,20)|place(100,-43,50)

You can take up to 2 action(s) at a time, separated by |.

You should first give your thought process with observation, reasoning, and prediction, and then your answer.

\textcolor{promptbrown}{Your response should be in the format of:}

<think><observation> ... </observation><reasoning> ... </reasoning><prediction> ... </prediction></think><answer> ... </answer>

\textcolor{promptbrown}{e.g.} <think><observation>I see a red cube at (100,100,40) and a green cube at (200,200,60).</observation><reasoning>I need to pick the red cube and place it on top of the green cube</reasoning><prediction>After executing this action, the red cube will be at position (200,200,100), stacked on top of the green cube at (200,200,60)</prediction></think><answer>pick(100,100,40)|place(200,200,100)</answer>
\promptdashline
\begin{minipage}[c]{0.80\linewidth}
    \textbf{User Prompt} \\[1ex]
    [Initial Observation]:

<image>

Human Instruction: Please put the apple in the drawer and close the drawer.

x\_workspace\_limit: (-500, -100)

y\_workspace\_limit: (-600, 200)

z\_workspace\_limit: (10, 650)

Object positions: 

[(-150, 126, 36), (-500, -537, 200)]

Other information:

\{'drawer\_open\_value': 263\}

Decide your next action(s).
\end{minipage}
\hspace{1em}%
\begin{minipage}[c]{0.15\linewidth}
    \centering
    \includegraphics[width=\linewidth]{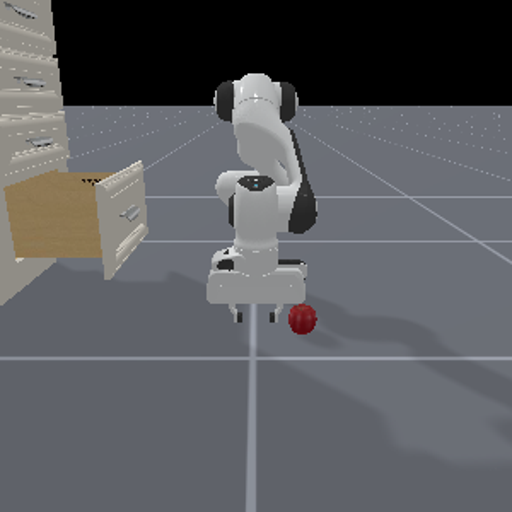}
\end{minipage}
\promptsolidline
\textcolor{promptblue}{
    \textbf{Model Output} \\[1ex]
    <think>I can see that the apple is located at position (-150, 126, 36), and the drawer is open at position (-500, -537, 200). The drawer is currently open, indicated by the drawer\_open\_value of 263. I need to pick the apple and place it in the drawer, then close the drawer.</observation>
}
\promptsolidline
\begin{minipage}[c]{0.80\linewidth}
    \textbf{User Prompt} \\[1ex]
    After your answer, the extracted valid action(s) is [].

After that, the observation is:

<image>

Human Instruction: Please put the apple in the drawer and close the drawer.

x\_workspace\_limit: (-500, -100)

y\_workspace\_limit: (-600, 200)

z\_workspace\_limit: (10, 650)

Object positions: 
[(-150, 126, 36), (-500, -537, 200)]

Other information:
\{'drawer\_open\_value': 263\}

Decide your next action(s).
\end{minipage}
\hspace{1em}%
\begin{minipage}[c]{0.15\linewidth}
    \centering
    \includegraphics[width=\linewidth]{images/env_example/primitive.png}
\end{minipage}
\promptsolidline
...
\end{academicprompt}

\paragraph{VIRL} 
We adopt the VIRL environment originally introduced in \cite{yang2024v,chu2025sft}. The goal of the task is to navigate to a destination on a map. We preserve the original task formulation and action space. 
\begin{lstlisting}[language=Python]
    forward() # Move forward one step along the road.
    turn_direction(x) # Adjust your direction towards x. x can be one of 8 compass directions: north, northeast, east, southeast, south, southwest, west, northwest.
    stop() # Indicate that you have reached the destination and the navigation is finished.
\end{lstlisting}
The original environment provides the ground-truth trajectory. For simplicity, we modify the environment to execute only the generated actions that are aligned with the ground-truth trajectory. At each turn, the environment returns a format reward, a key-point reward if any key points are reached, and an action reward if the action is correct. At the end of the episode, the environment returns a success reward to the agent. The reward details are provided in \cref{tab:env_reward}. During evaluation, we report the success rates on in-domain and out-of-domain tasks. In-domain tasks correspond to scenarios from the same city as the training data (NYC), while out-of-domain tasks come from Hong Kong, Buenos Aires, and New York. Below, we show an example of the first turn of the VIRL environment in both the training and evaluation stages.
\begin{academicprompt}{VIRL Example}
    \textbf{System Prompt} \\[1ex]
   \textcolor{promptbrown}{You are a navigation agent} following instructions on real-world streets using Google Street View images.

You observe a 2x2 grid of street view images from your current position. Top-left: front view, top-right: right view, bottom-left: back view, bottom-right: left view.

Your current facing direction will be indicated with each observation.

\textcolor{promptbrown}{Actions you can take}: forward(), turn\_direction(x), stop()

forward(): Move forward one step along the road.

turn\_direction(x): Adjust your direction towards x. x can be one of 8 compass directions: north, northeast, east, southeast, south, southwest, west, northwest.

stop(): Indicate that you have reached the destination and the navigation is finished.

The navigation instruction will be provided with each observation. Look at the street view images carefully and follow the instructions to navigate to the destination.

\textcolor{promptbrown}{Hints:}

1. Pay attention to landmarks and intersections mentioned in the instructions.

2. When you see an intersection, check your instructions to decide whether to turn or continue forward.

3. If the instruction requires turn, always turn no matter where you are facing. 

4. If the instruction requires move forward, you may need to do multiple move forward actions until you reach the required position. 

You can take up to 10 action(s) at a time, separated by `,'.

You should first give your thought process with your observation, reasoning, and prediction of next state, then your answer.

Both the observation and prediction should describe what you see or expect to see in the street view images.

\textcolor{promptbrown}{Your response should be in the format of:}

<think><observation> ... </observation><reasoning> ... </reasoning><prediction> ... </prediction></think><answer> ... </answer>

\textcolor{promptbrown}{e.g.}

Round 1:

[image]

Navigation Instruction:

[ ] 1. First, turn south to face south.

[ ] 2. Move forward until you reach next intersection.

[ ] 3. Turn right to face north.

[ ] 4. Move forward until you reach destination.

Movement history: (none)

<think><observation>I see a street view with buildings on both sides.</observation><reasoning>The instructions say: turn south, move forward, until the next intersection.</reasoning><prediction>I should first turn south and then move forward and see if there is intersection.</prediction></think>

<answer>turn\_direction(south), forward()</answer>

Round 2:

Env\_feedback: Executed 2 action(s). Reached checkpoint 1.

[image]

Navigation Instruction:

[x] 1. First, turn south to face south.

[ ] 2. Move forward until you reach next intersection.

[ ] 3. Turn right to face north.

[ ] 4. Move forward until you reach destination.

Movement history: turn\_direction(south), forward()

<think><observation>I am still on the road</observation><reasoning>From the instruction, i have not reached the next interaction, i need to keep forward</reasoning><prediction>>I need to forward until the next interaction</prediction></think>

<answer>forward()</answer>

\promptdashline
\begin{minipage}[c]{0.80\linewidth}
    \textbf{User Prompt} \\[1ex]
    You are currently facing northeast.

Navigation Instruction:

[ ] 1. First, turn slightly right to face northeast.

[ ] 2. Move forward until you reach next intersection.

[ ] 3. Turn right to face northwest.

[ ] 4. Move forward until you reach destination where The destination ReinoMed Pharmacy \& Stores is on your right behind.

Movement history: (none)

Decide your next action.
\end{minipage}
\hspace{1em}%
\begin{minipage}[c]{0.15\linewidth}
    \centering
    \includegraphics[width=\linewidth]{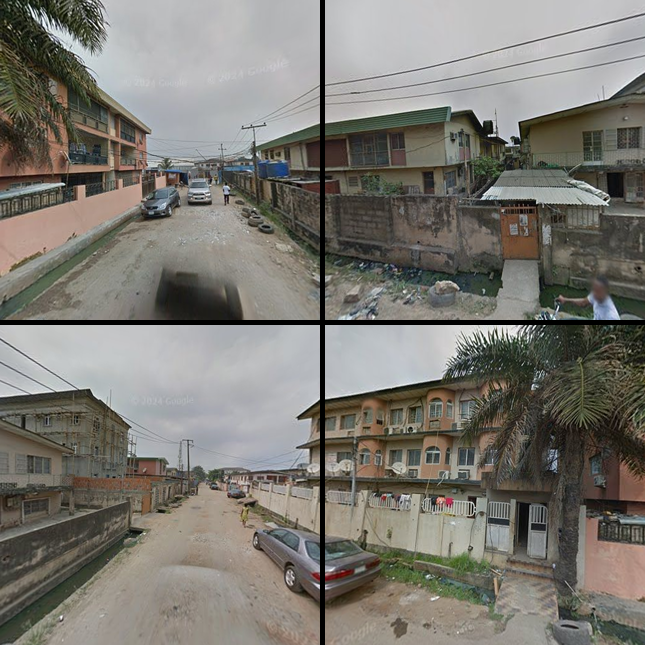}
\end{minipage}
\promptsolidline
\textcolor{promptblue}{
    \textbf{Model Output} \\[1ex]
    <think><observation>I am facing northeast</observation><reasoning>Instruction says to turn slightly right to face northeast.</reasoning><prediction>I will turn to face northeast.</prediction></think><answer>turn\_direction(northeast), forward()</answer>
}
\promptsolidline
\begin{minipage}[c]{0.80\linewidth}
    \textbf{User Prompt} \\[1ex]
    After your answer, the extracted valid action is [turn\_direction(northeast), forward()].

The environment feedback is: Executed 2 action(s). Reached checkpoint 1. Reached checkpoint 2

reward: 8.1

done: False

After that, the observation is:

<image>

You are currently facing northeast.

Navigation Instruction:

[x] 1. First, turn slightly right to face northeast.

[x] 2. Move forward until you reach next intersection.

[ ] 3. Turn right to face northwest.

[ ] 4. Move forward until you reach destination where The destination ReinoMed Pharmacy \& Stores is on your right behind.

Movement history: turn\_direction(northeast), forward()

Decide your next action.
\end{minipage}
\hspace{1em}%
\begin{minipage}[c]{0.15\linewidth}
    \centering
    \includegraphics[width=\linewidth]{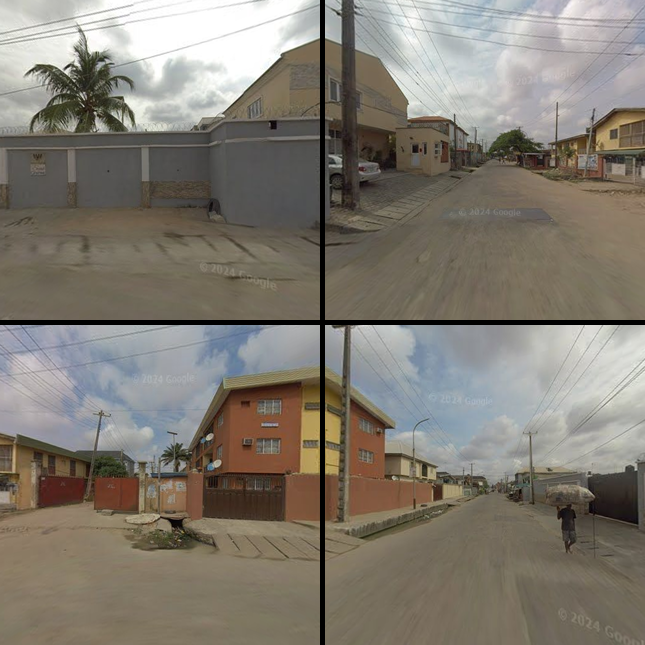}
\end{minipage}
\promptsolidline
...
\end{academicprompt}

\subsection{Training Details}

We follow the training procedure of \cite{yang2024v} for all tasks, with RL training supported by veRL~\cite{sheng2024hybridflow}. We train the models on NVIDIA A100, H100, and H200 GPUs.
The training hyperparameters are shown in \cref{tab:training_hyperparameters}. The environment-specific training hyperparameters are shown in \cref{tab:env_specific_training_hyperparameters}. 

\begin{table}[t]\centering
    \caption{Training hyperparameters.}
    \label{tab:training_hyperparameters}
    \begin{tabular}{lc}                                                                           
        \toprule                                                                        
        \textbf{Hyperparameter} & \textbf{Value} \\                                               
        \midrule               
        Base Model & Qwen2.5-VL-3B-Instruct \\                                                    
        Actor Learning Rate & $1 \times 10^{-6}$ \\                                               
        Critic Learning Rate & $1 \times 10^{-5}$ \\      
        Optimizer & AdamW \\                                        
        Train Batch Size & 128 \\                                                                 
        PPO Mini Batch Size & 32 \\
        KL Coefficient & 0.001 \\
        Entropy Coefficient & 0.001 \\
        Temperature & 0.7 \\
        Top-$p$ & 0.95 \\
        Advantage Estimator & HybAdV \\
        Discount Factor ($\gamma$) & 0.99 \\
        Mixing Coefficient ($\alpha$) & 0.5 \\
        \bottomrule
    \end{tabular}
\end{table}
\begin{table}[t]\centering\scriptsize
    \caption{Environment-specific training hyperparameters.}
    \label{tab:env_specific_training_hyperparameters}
    \begin{tabular}{lccccc}
        \toprule
        \textbf{Hyperparameter} & \textbf{Sokoban} & \textbf{FrozenLake} & \textbf{Navigation} &
    \textbf{Primitive Skill} & \textbf{VIRL} \\
        \midrule
        Max Response Length   & 200  & 200  & 200  & 200  & 150  \\
        Max Trajectory Length & 2400 & 2400 & 4000 & 4000 & 8192 \\
        Max Turns             & 3    & 3    & 4    & 3    & 7    \\
        Window Size           & 5    & 5    & 5    & 5    & 1    \\
        \#GPUs                & 2    & 2    & 4    & 4    & 8   \\
        Training Steps        & 300  & 300  & 100  & 100  & 100  \\
        \bottomrule
    \end{tabular}
\end{table}

\end{document}